\providecommand{\tabularnewline}{\\}
\providecommand{\algorithmname}{Algorithm}
\theoremstyle{plain}
\newtheorem{thm}{\protect\theoremname}
  \theoremstyle{plain}
  \newtheorem{lem}[thm]{\protect\lemmaname}
  \providecommand{\lemmaname}{Lemma}
\providecommand{\theoremname}{Theorem}
\begin{document}

\title{A spatial compositional model (SCM) for linear unmixing and endmember
uncertainty estimation}

\author{Yuan Zhou, Anand Rangarajan and Paul Gader\\
Dept. of Computer and Information Science and Engineering\\
University of Florida, Gainesville, FL, USA\\
E-mail: \textsf{\{yuan,anand,pgader\}@cise.ufl.edu}}
\maketitle
\begin{abstract}
The normal compositional model (NCM) has been extensively used in
hyperspectral unmixing. However, most of the previous research has
focused on estimation of endmembers and/or their variability. Also,
little work has employed spatial information in NCM. In this paper,
we show that NCM can be used for calculating the \emph{uncertainty}
of the estimated endmembers with spatial priors incorporated for better
unmixing. This results in a spatial compositional model (SCM) which
features (i) spatial priors that force neighboring abundances to be
similar based on their pixel similarity and (ii) a posterior that
is obtained from a likelihood model which does not assume pixel independence.
The resulting algorithm turns out to be easy to implement and efficient
to run. We compared SCM with current state-of-the-art algorithms on
synthetic and real images. The results show that SCM can in the main
provide more accurate endmembers and abundances. Moreover, the estimated
uncertainty can serve as a prediction of endmember error under certain
conditions.
\end{abstract}

\section{Introduction}

Hyperspectral image unmixing has received wide attention in the remote
sensing, signal and image processing communities \cite{keshava2002spectral,bioucas2012hyperspectral}.
The widely researched model in this area is the \emph{linear mixing
model} (LMM). Assume we have a hyperspectral image $I\left(\mathbf{x}\right):\mathcal{D}\to L^{2}\left(\mathbb{R}^{+}\right)$
where $\mathcal{D}\subset\mathbb{R}^{2}$ is the image domain with
$L^{2}\left(\mathbb{R}^{+}\right)$ denoting the space of square integrable
functions on the positive part of the real line. LMM assumes that
the spectral measurement of each pixel $g\left(\mathbf{x},\lambda\right)$,
with $\lambda$ being the wavelength, is a non-negative linear combination
of the spectral signature of some pure materials, called \emph{endmembers},
$f_{j}\left(\lambda\right):\mathbb{R}^{+}\to\mathbb{R}^{+}$. The
governing equation is 
\begin{gather}
g\left(\mathbf{x},\lambda\right)=\sum_{j=1}^{M}f_{j}\left(\lambda\right)\alpha_{j}\left(\mathbf{x}\right)+n\left(\mathbf{x},\lambda\right)\label{eq:LMM1}\\
\forall\mathbf{x}\in\mathcal{D},\,\sum_{j=1}^{M}\alpha_{j}\left(\mathbf{x}\right)=1\nonumber 
\end{gather}
where $M$ is the number of endmembers, $\alpha_{j}\left(\mathbf{x}\right):\mathcal{D}\to\mathbb{R}^{+}$
is the fractional \emph{abundance map} of the \emph{j}th endmember
and satisfies the positivity and sum-to-one (simplex) constraints,
and $n\left(\mathbf{x},\lambda\right):\mathcal{D}\times\mathbb{R}^{+}\to\mathbb{R}$
is a small, additive perturbation (noise). As a result, the pixels
generated by this model form a simplex in an infinite dimensional
vector space whose vertices are the endmembers. If we discretize $f_{j}\left(\lambda\right)$
into $B$ bands and get $\mathbf{m}_{j}\in\mathbb{R}^{B}$ as the
discretized value, and further discretize $\mathcal{D}$ into $N$
locations, equation~(\ref{eq:LMM1}) states that the spectrum of
the \emph{i}th pixel can be represented by 
\[
\mathbf{y}_{i}=\mathbf{M}^{T}\boldsymbol{\alpha}_{i}+\mathbf{n}_{i}
\]
where $\mathbf{M}=\left[\mathbf{m}_{1},...,\mathbf{m}_{M}\right]^{T}$,
$\boldsymbol{\alpha}_{i}=\left[\alpha_{i1},...,\alpha_{iM}\right]^{T}$,
$\mathbf{y}_{i}\in\mathbb{R}^{B}$, $\mathbf{n}_{i}\in\mathbb{R}^{B}$.
Combining the above equation for all the pixels, we have the following
equation for LMM:
\begin{gather}
\mathbf{Y=AM+N}\label{eq:Y=00003DAM+N}
\end{gather}
where $\mathbf{Y}\in\mathbb{R}^{N\times B}$, $\mathbf{A}\in\mathbb{R}^{N\times M}$,
$\mathbf{M}\in\mathbb{R}^{M\times B}$, $\mathbf{N}\in\mathbb{R}^{N\times B}$.

The linear unmixing problem is to retrieve $\mathbf{A}$ and $\mathbf{M}$
given $\mathbf{Y}$. This is an ill-posed inverse problem as it can
have an infinite number of solutions. Figure~\ref{fig:difficulties}
shows the difficulties stemming from this underdetermined nature:
Figure~\ref{fig:difficulties}(a) shows the pixels (gray area) generated
by 2 endmembers $\mathbf{m}_{1}$, $\mathbf{m}_{2}$ with $\boldsymbol{\alpha}=(0.5,0.5)^{T}$
for all pixels. Clearly it is not possible to retrieve $\mathbf{A}$
and $\mathbf{M}$ given $\mathbf{Y}$ in this case. As shown in the
figure, we can have $\mathbf{m}_{1}^{\prime}$, $\mathbf{m}_{2}^{\prime}$
or $\mathbf{m}_{1}^{\prime\prime}$, $\mathbf{m}_{2}^{\prime\prime}$
generate the same pixels. In fact, we can have the whole Euclidean
space for the endmembers. In Figure~\ref{fig:difficulties}(b), the
abundances $\boldsymbol{\alpha}$ ranges from $(0.7,0.3)$ to $(0.2,0.8)$.
Here, we can determine that the endmembers should lie in a line that
fits the pixels. However, we still cannot determine the specific position
of the endmembers without other information. For example, $\mathbf{m}_{1}^{\prime}$,
$\mathbf{m}_{2}^{\prime}$ or $\mathbf{m}_{1}^{\prime\prime}$, $\mathbf{m}_{2}^{\prime\prime}$
can both serve as the endmember set. In Figure~\ref{fig:difficulties}(c),
we have the information that the abundances range from $(1,0)$ to
$(0,1)$. Now we can find the endmembers as we are not only given
the line, but also the relative position of the endmembers to the
boundary of the pixels. This is the only case where we have a unique
solution which corresponds to the original endmembers. The intuition
(stemming from this observation) that the endmembers should tightly
surround the pixels has been extensively used in the literature, in
the form of minimal volume \cite{miao2007endmember}, pure pixels
\cite{nascimento2005vertex}, or pairwise closeness \cite{berman2004ice,zare2013piecewise}.
Figure~\ref{fig:difficulties}(d) shows another interesting problem.
Suppose we have two endmember sets, $\mathbf{m}_{1}$, $\mathbf{m}_{2}$
and $\mathbf{m}_{3}$, $\mathbf{m}_{4}$ and we can find them. How
can we then determine the abundance $\boldsymbol{\alpha}$ of the
pixel in the intersection given these endmembers? Should it be a linear
combination of $\mathbf{m}_{1}$ and $\mathbf{m}_{2}$ or a linear
combination of $\mathbf{m}_{3}$ and $\mathbf{m}_{4}$, or a combination
of all of them? However, if the pixels from an endmember set lies
in one region while those from the other set lie in another region,
the abundances in the intersection can by easily identified by the
spatial location. This implies that spatial information should be
used in the unmixing process.

\begin{figure}
\begin{centering}
\includegraphics[width=1\textwidth]{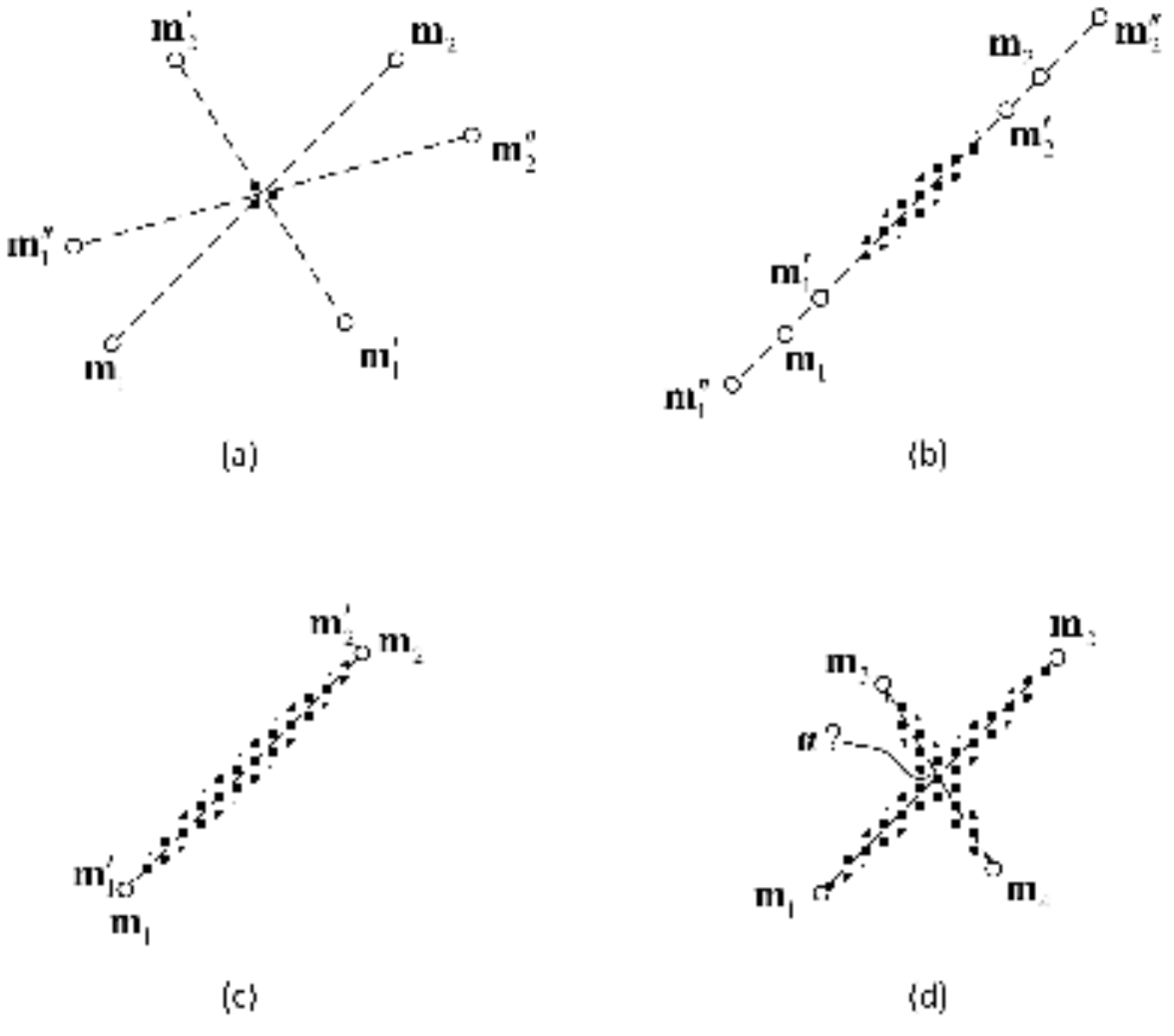}
\par\end{centering}

\caption{Difficulties in the linear unmixing problem: $\mathbf{m}_{1}$, $\mathbf{m}_{2}$
are the endmembers that generate the pixel data. $\mathbf{m}_{1}^{\prime}$,
$\mathbf{m}_{2}^{\prime}$ and $\mathbf{m}_{1}^{\prime\prime}$, $\mathbf{m}_{2}^{\prime\prime}$
are the possible endmembers that can be inferred from the data. In
(a), (b) are 2 cases where the true endmembers can not be estimated
while (c) contains a case where they can be estimated under some assumptions.
In (d), we show the importance of spatial location to abundance estimation.}
\label{fig:difficulties}
\end{figure}

The methods developed to solve this problem may be mainly categorized
into geometrical, statistical and sparse regression based approaches
\cite{bioucas2012hyperspectral}. For example, vertex component analysis
(VCA) assumes the endmembers are present in the image pixels \cite{nascimento2005vertex},
iterative constrained endmembers (ICE) minimizes the least squares
error under the pairwise closeness constraint \cite{berman2004ice},
minimum volume constrained nonnegative matrix factorization (MVC-NMF)
minimizes the same error and the volume of the simplex \cite{miao2007endmember},
piecewise convex multiple-model endmember detection (PCOMMEND) minimizes
the least square errors of separate convex sets \cite{zare2013piecewise}. 

Besides these methods that rely only on independent pixels, some recent
work introduced spatial information to aid the unmixing process \cite{jia2009constrained,eches2011enhancing,iordache2012total}.
For example, in \cite{jia2009constrained} two smoothness terms for
abundances and endmembers were proposed to utilize the spatial information
in terms of wavelength proximity and pixel location. In \cite{eches2011enhancing}
a Markov Random Field (MRF), Potts-Markov model, was used to model
the partitioning of the image that can help the unmixing process.
Sampling methods were used to infer the unknown parameters. In \cite{iordache2012total},
a constraint that minimizes the $L_{1}$ norm of the differences between
neighboring abundances was proposed to impose spatial correlation.

Another type of method is based on modeling the likelihood using Gaussian
density functions, also known as the \emph{normal compositional model}
(NCM) \cite{eches2010bayesian,eches2010estimating,zare2010pce,stein2003application,zhangpso}.
The earliest application of NCM to hyperspectral unmixing can be traced
back to \cite{stein2003application}, where a maximal likelihood estimation
(MLE) approach was presented for NCM endmember extraction. In \cite{eches2010bayesian,eches2010estimating},
priors (mainly uniform distributions) were imposed on endmembers and
abundances with sampling methods used to maximize the posterior. They
assumed the variance of each wavelength was independent and estimated
one parameter of variability for each endmember. In \cite{zare2010pce},
a Dirichlet prior distribution on the abundances was used. However,
in their model the covariance matrices of the endmembers are assumed
to be known instead of being unknown parameters to be estimated. In
\cite{zhangpso}, a more complicated NCM without the assumption of
independence of each wavelength was proposed. They maximize the posterior
using particle swarm optimization based expectation maximization.

However, there is little work on estimating the \emph{model uncertainty
}of the endmembers directly from the linear mixing model. That is,
given the pixel data and an estimated endmember set, the endmember
estimates may have residual uncertainty. For example, Figure~\ref{fig:uncertainty}
shows 3 possible estimated endmember sets on a synthetic dataset when
$B=2$. We can expect the endmember set $\mathbf{m}_{1},\mathbf{m}_{2},\mathbf{m}_{3}$
to have a small uncertainty since they fit the pixels very well. Allowing
them to move around may ruin the fitting. $\mathbf{m}_{1}^{\prime},\mathbf{m}_{2}^{\prime},\mathbf{m}_{3}^{\prime}$
are located within the pixels. They should have a large uncertainty
because they can move around more freely to better fit the pixels.
For $\mathbf{m}_{1}^{\prime\prime},\mathbf{m}_{2}^{\prime\prime},\mathbf{m}_{3}^{\prime\prime}$,
the uncertainty is ambiguous because they have already fitted the
pixels very well while they may also move around to some degree. Hence
we will not consider the uncertainty in this case in the present work.

\begin{figure}
\begin{centering}
\includegraphics[width=1\textwidth]{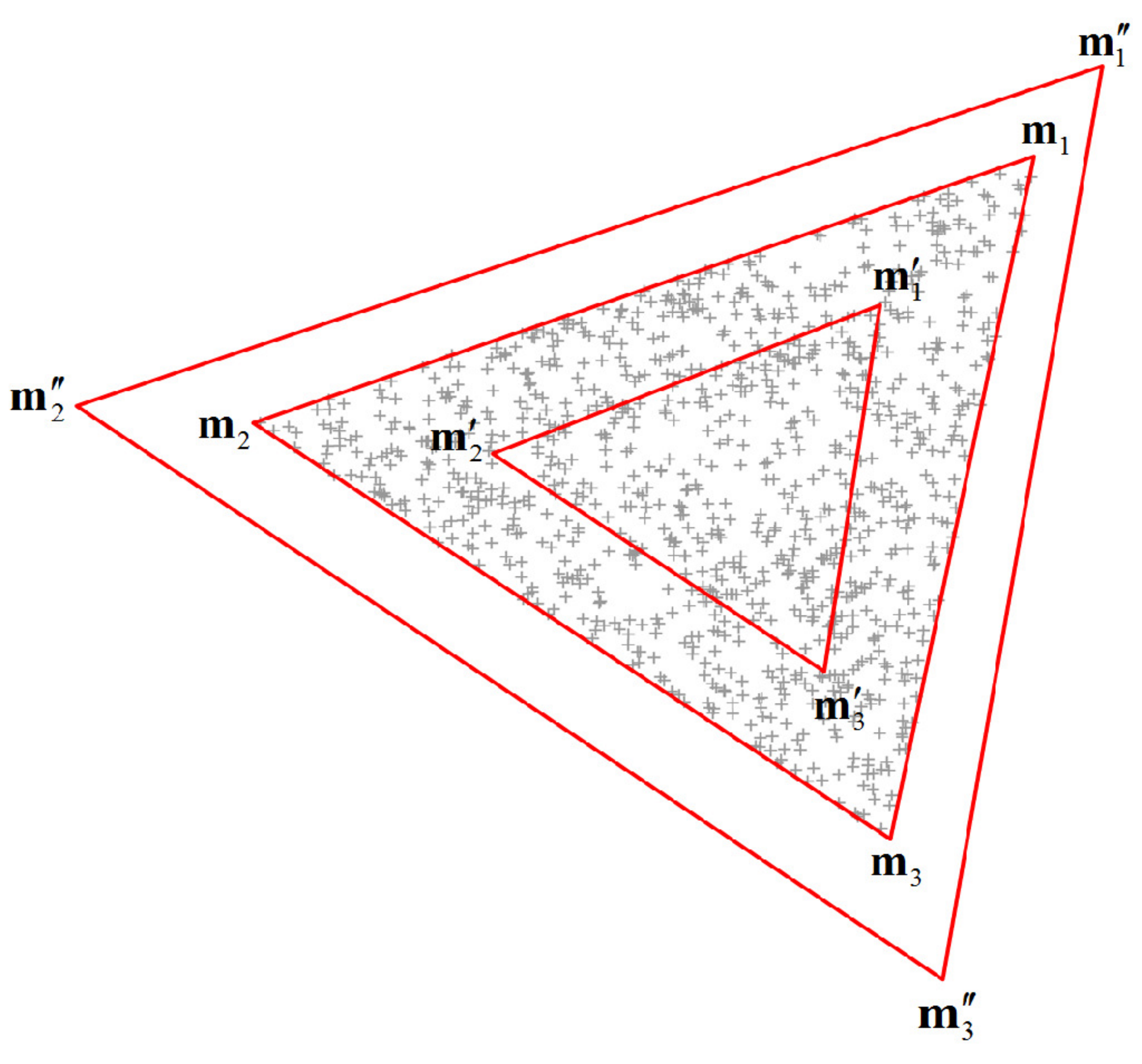}
\par\end{centering}

\caption{Model uncertainty of the estimated endmembers at different positions.
Intuitively, $\mathbf{m}_{1},\mathbf{m}_{2},\mathbf{m}_{3}$ should
have a small uncertainty while the uncertainty of $\mathbf{m}_{1}^{\prime},\mathbf{m}_{2}^{\prime},\mathbf{m}_{3}^{\prime}$
should be large. The uncertainty of $\mathbf{m}_{1}^{\prime\prime},\mathbf{m}_{2}^{\prime\prime},\mathbf{m}_{3}^{\prime\prime}$
will not be discussed here due to its ambiguity.}

\label{fig:uncertainty}
\end{figure}

The above intuition implies that the uncertainty may reflect the error
of endmembers. To show how this intuition formally works in NCM, assume
a simple case that an endmember $\mathbf{m}\in\mathbb{R}^{2}$ follows
a Gaussian distribution centered at $\mathbf{r}\in\mathbb{R}^{2}$
with covariance matrix $\boldsymbol{\Sigma}\in\mathbb{R}^{2\times2}$:
\[
p\left(\mathbf{m}\right)=\mathcal{N}\left(\mathbf{m}|\mathbf{r},\boldsymbol{\Sigma}\right).
\]
Suppose $\mathbf{m}$ is given and $\mathbf{r}$ has been estimated
with $\mathbf{m}\neq\mathbf{r}$. We want to find $\boldsymbol{\Sigma}$
using maximum likelihood estimation (MLE). Maximizing $p\left(\mathbf{m}\right)$
is equivalent to minimizing 
\[
-\log p(\mathbf{m})=\frac{1}{2}\text{log}\left|\boldsymbol{\Sigma}\right|+\frac{1}{2}\left(\mathbf{m}-\mathbf{r}\right)^{T}\boldsymbol{\Sigma}^{-1}\left(\mathbf{m}-\mathbf{r}\right).
\]
Let $\boldsymbol{\Sigma}=\mathbf{U}\text{diag}\left(\sigma_{1}^{2},\sigma_{2}^{2}\right)\mathbf{U}^{T}$,
$\sigma_{1}>0$, $\sigma_{2}>0$ be the eigendecomposition, then the
minimization problem above becomes
\[
\text{log}\sigma_{1}+\text{log}\sigma_{2}+\frac{1}{2}\sigma_{1}^{-2}z_{1}^{2}+\frac{1}{2}\sigma_{2}^{-2}z_{2}^{2}
\]
where $\mathbf{z}=\left(z_{1},z_{2}\right)^{T}=\mathbf{U}^{T}\left(\mathbf{m}-\mathbf{r}\right)$.
When the eigenvector in $\mathbf{U}$ is not perpendicular to $\mathbf{m}-\mathbf{r}$,
i.e. $z_{1}\neq0$, $z_{2}\neq0$, the minimization can be achieved
by setting the derivatives with respect to $\sigma_{1}$ and $\sigma_{2}$
to 0, which leads to 
\[
\sigma_{1}=\left|z_{1}\right|,\,\sigma_{2}=\left|z_{2}\right|.
\]
However, this is not the global minimum because if one eigenvector
in $\mathbf{U}$ is perpendicular to $\mathbf{m}-\mathbf{r}$ (the
other being parallel), say $z_{2}=0$, $z_{1}=\Vert\mathbf{m}-\mathbf{r}\Vert$,
$\sigma_{2}$ can be arbitrarily close to 0 such that $\text{log}\sigma_{2}$
goes to negative infinity. Assume $\sigma_{i}\ge\epsilon$ for a small
positive $\epsilon$ to make a solution exist, then the global minimum
lies at $\sigma_{1}=\Vert\mathbf{m}-\mathbf{r}\Vert$, $\sigma_{2}=\epsilon$.
Therefore, we can see that the MLE estimated matrix $\boldsymbol{\Sigma}$
has the square root of its largest eigenvalue equal to $\Vert\mathbf{m}-\mathbf{r}\Vert$
while its eigenvector is parallel to $\mathbf{m}-\mathbf{r}$. For
our formulation (\ref{eq:Y=00003DAM+N}), assume $\mathbf{M}$ follows
a Gaussian distribution with centers in $\mathbf{R}$. We then propose
a \emph{fundamental} question: 
\begin{itemize}
\item Given $\mathbf{Y}$, can we find the covariance matrices (uncertainty)
that measure the difference between the estimated endmembers $\mathbf{R}$
and the ground truth $\mathbf{M}$?
\end{itemize}
If the answer is yes, we have a measure to predict the error without
knowing the ground truth. This paper attempts to find such covariance
matrices.

The previous NCMs did not solve this problem. The covariance matrices
from the previous NCMs represent the \emph{endmember variability}
which arises from the assumption that the endmember set used for linearly
generating a pixel may vary per pixel due to atmospheric, environmental,
temporal factors and intrinsic variability in a material \cite{zare2014endmember}.
We explain the difference between these two concepts, uncertainty
and variability, here by first summarizing the previous NCMs. Suppose
the $j$th endmember follows a Gaussian distribution centered at $\mathbf{r}_{j}\in\mathbb{R}^{B}$
with covariance matrix $\boldsymbol{\Sigma}_{j}$:
\[
p(\mathbf{m}_{j})=\mathcal{N}\left(\mathbf{m}_{j}\vert\mathbf{r}_{j},\boldsymbol{\Sigma}_{j}\right).
\]
Assuming the endmembers to be independent, the random variable transformation
$\mathbf{y}_{i}=\mathbf{M}^{T}\boldsymbol{\alpha}_{i}$ for each pixel
suggests that the probability density function of $\mathbf{y}_{i}$
can be derived as
\[
p(\mathbf{y}_{i})=\mathcal{N}\left(\mathbf{y}_{i}\vert\mathbf{R}^{T}\boldsymbol{\alpha}_{i},\sum_{j=1}^{M}\alpha_{ij}^{2}\boldsymbol{\Sigma}_{j}\right)
\]
where $\mathbf{R}=\left[\mathbf{r}_{1},...,\mathbf{r}_{M}\right]^{T}$.
Then, NCM assumes the pixels are independent and obtains the density
function of $\mathbf{Y}$ as
\begin{equation}
p(\mathbf{Y})=\prod_{i=1}^{N}p(\mathbf{y}_{i})\label{eq:independentY}
\end{equation}
which is another Gaussian distribution with a \emph{block diagonal}
covariance matrix. The estimation of $\mathbf{r}_{j}$ and $\boldsymbol{\Sigma}_{j}$
is handled differently in different works.

To estimate the uncertainty however, we can not assume the pixels
to be independent. To see this, suppose $B=1$. Then we have $\mathbf{M}\in\mathbb{R}^{M}$,
$\mathbf{R}\in\mathbb{R}^{M}$, $\mathbf{Y}\in\mathbb{R}^{N}$ which
are all vectors and the covariance matrix in (\ref{eq:independentY})
becomes an $N$ by $N$ diagonal matrix. The independence of endmembers
suggests that the density of $\mathbf{M}$ is given by
\[
p(\mathbf{M})=\mathcal{N}\left(\mathbf{M}\vert\mathbf{R},\boldsymbol{\Sigma}\right)
\]
where $\boldsymbol{\Sigma}$ is an $M$ by $M$ diagonal matrix with
each element being the variance of each endmember. The random variable
transformation $\mathbf{Y}=\mathbf{A}\mathbf{M}$ indicates that the
density function of $\mathbf{Y}$ does not even exist. This is because
the domain of $p(\mathbf{M})$ ($\mathbb{R}^{M}$) is projected to
a subspace of dimension $M$ in $\mathbb{R}^{N}$, which has measure
0 (integrating $p(\mathbf{Y})$ would give value 0). The only way
to make the density function exist is to add noise, i.e., use equation
(\ref{eq:Y=00003DAM+N}). By assuming the noise to be Gaussian and
independent for each pixel, $p(\mathbf{N})=\mathcal{N}\left(\mathbf{N}\vert\mathbf{0},\mu^{2}\mathbf{I}_{N}\right)$,
we can see that the density function of $\mathbf{Y}$ becomes
\[
p(\mathbf{Y})=\mathcal{N}\left(\mathbf{Y}\vert\mathbf{A}\mathbf{R},\,\mathbf{A}\boldsymbol{\Sigma}\mathbf{A}^{T}+\mu^{2}\mathbf{I}_{N}\right)
\]
where the covariance matrix is not a diagonal matrix, which indicates
the pixels are \emph{not} independent. The general case with $B>1$
will be derived later.

In this paper, we solve the problem of estimating the model uncertainty
by proposing a \emph{spatial compositional model }(SCM) based on NCM
without assuming independence while utilizing spatial information
on the abundances. Compared to previous NCMs and methods with spatial
information, our method differs in the following aspects:
\begin{enumerate}
\item In contrast to the previous NCMs which assume pixel independence,
we estimate the full likelihood of the pixels (without the independence
assumption). Hence, we can obtain the endmember uncertainty that predicts
the error.
\item In the previous works, a uniform smoothness term is imposed to force
every two neighboring abundances to be similar \cite{eches2011enhancing,iordache2012total}.
In our work, the smoothness term varies locally according to pixel
information. Moreover, it is in a quadratic form which entails a simple
algorithm.
\item The previous works assume the covariance matrices of endmembers have
a simple form, e.g. diagonal (each wavelength is independent), which
neglects the correlation between wavelengths \cite{eches2010bayesian,eches2010estimating}.
Here we estimate the full covariance matrices and capture the correlations.
\end{enumerate}
The resulting model can be summarized by modeling the priors on abundances
based on the spatial information, the priors on endmembers based on
smoothness and principles of pairwise closeness, and transforming
the Gaussian probability functions to obtain the posterior which can
be maximized. The final minimization problem can be solved by a simple
and efficient optimization algorithm that not only provides the endmembers,
the abundances, but also the uncertainty. 

\emph{Notation}. Throughout the paper, $\text{SPD}\left(n\right)$
denotes the set of all $n$ by $n$ symmetric positive definite matrices.
We use the following notation for operations on a matrix $\mathbf{A}=\left[\mathbf{a}_{1},...,\mathbf{a}_{n}\right]$.
We use $\text{Tr}\left(\mathbf{A}\right)$, $\left|\mathbf{A}\right|$,
$\text{vec}\left(\mathbf{A}\right)$ to denote the trace, determinant,
vectorization of $\mathbf{A}$ respectively. The vectorization operator
is defined by concatenating its columns, $\text{vec}\left(\mathbf{A}\right)=\left[\mathbf{a}_{1}^{T},...,\mathbf{a}_{n}^{T}\right]^{T}$.
We use $[a_{ij}]$ to denote a matrix in which the element at the
$i$th row, $j$th column is $a_{ij}$. So the matrix $\left[\delta_{ij}a_{i}\right]$
is a diagonal matrix with diagonal element $a_{i}$ by defining $\delta_{ij}=1$
only when $i=j$ and 0 otherwise. We use $\mathbf{A}\ge0$ to denote
that $a_{ij}\ge0$ given $\mathbf{A}=\left[a_{ij}\right]$. The Kronecker
product between two matrices $\mathbf{A}$ and $\mathbf{B}$ is defined
by $\mathbf{A}\otimes\mathbf{B}=\left[a_{ij}\mathbf{B}\right]$. We
use $\Vert\mathbf{A}\Vert$, $\Vert\mathbf{A}\Vert_{F}$ as the operator
norm and Frobenius norm of $\mathbf{A}$ respectively. We use $\mathbf{I}_{N}$
for the $N$ by $N$ identity matrix and $\mathbf{1}_{N}$ as an $N$
by 1 vector consisting of all 1s.

\section{The Spatial Compositional Model}

\subsection{The hyperspectral image likelihood}

We are interested in determining the uncertainty of the extracted
endmembers. To achieve this, we  first model the density function
of $\mathbf{M}$, then use (\ref{eq:Y=00003DAM+N}) to perform a random
variable transformation to get the density function of $\mathbf{Y}$,
and finally maximize the posterior given $\mathbf{Y}$ to find the
parameters. Assuming that the endmember $\mathbf{m}_{j}$ follows
a multivariate Gaussian centered at $\mathbf{r}_{j}$ with covariance
matrix $\mathbf{\boldsymbol{\Sigma}}_{j}$, we have the conditional
probability density function of $\mathbf{m}_{j}$:
\[
p(\mathbf{m}_{j}\vert\mathbf{r}_{j},\boldsymbol{\Sigma}_{j})=\mathcal{N}\left(\mathbf{m}_{j}\vert\mathbf{r}_{j},\boldsymbol{\Sigma}_{j}\right).
\]
We can also assume that the endmembers are independent, which leads
to the conditional probability density function of the whole endmember
set to be the product of the independent components:
\begin{equation}
p(\mathbf{M}\vert\mathbf{R},\boldsymbol{\Theta})=\mathcal{N}\left(\text{vec}(\mathbf{M}^{T})\vert\text{vec}(\mathbf{R}^{T}),\,[\delta_{ij}\boldsymbol{\Sigma}_{j}]\right),\label{eq:MPdf}
\end{equation}
where $\boldsymbol{\Theta}=\left\{ \boldsymbol{\Sigma}_{j}\right\} $,
the covariance matrix is an $MB$ by $MB$ block diagonal matrix.

From the probability density function of the endmembers in (\ref{eq:MPdf}),
we can obtain the probability density function of $\mathbf{Y}$ from
the linear transformation in (\ref{eq:Y=00003DAM+N}). From straightforward
matrix algebra, we see that
\begin{eqnarray}
\text{vec}\left((\mathbf{AM})^{T}\right) & = & (\mathbf{A}\otimes\mathbf{I}_{B})\text{vec}(\mathbf{M}^{T}).\label{eq:M2AM}
\end{eqnarray}
Assuming that the noise $\mathbf{n}_{i}$ follows an independent zero
mean, $\mu^{2}$ variance Gaussian at each wavelength which is independent
at different locations, we have
\begin{equation}
p(\mathbf{N}\vert\mu)=\mathcal{N}\left(\text{vec}(\mathbf{N}^{T})\vert\mathbf{0},\,\mu^{2}\mathbf{I}_{NB}\right).\label{eq:NPdf}
\end{equation}
From the probability density functions in (\ref{eq:NPdf}), (\ref{eq:MPdf})
and the transformation in (\ref{eq:M2AM}), equation~(\ref{eq:Y=00003DAM+N})
indicates that the conditional probability density function of $\mathbf{Y}$
can be obtained as
\begin{equation}
p(\mathbf{Y}\vert\mathbf{R},\boldsymbol{\Theta},\mathbf{A},\mu)=\mathcal{N}\left(\text{vec}(\mathbf{Y}^{T})\vert\boldsymbol{\mu}_{\mathbf{Y}},\boldsymbol{\Sigma}_{\mathbf{Y}}\right),\label{eq:YCondPdf}
\end{equation}
where 
\begin{eqnarray*}
\boldsymbol{\mu}_{\mathbf{Y}} & = & (\mathbf{A}\otimes\mathbf{I}_{B})\text{vec}(\mathbf{R}^{T})\\
 & = & \text{vec}\left((\mathbf{AR})^{T}\right),
\end{eqnarray*}
\begin{eqnarray}
\boldsymbol{\Sigma}_{\mathbf{Y}} & = & (\mathbf{A}\otimes\mathbf{I}_{B})[\delta_{ij}\boldsymbol{\Sigma}_{j}](\mathbf{A}\otimes\mathbf{I}_{B})^{T}+\mu^{2}\mathbf{I}_{NB}\nonumber \\
 & = & \left[\delta_{ij}\mu^{2}\mathbf{I}_{B}+\sum_{k=1}^{M}\alpha_{ik}\alpha_{jk}\boldsymbol{\Sigma}_{k}\right].\label{eq:Def-SigmaY}
\end{eqnarray}
Note that the covariance matrix in (\ref{eq:Def-SigmaY}) is not a
block diagonal matrix, which means that the transformed rows in $\mathbf{Y}$
are not independent.

\subsection{Modeling the priors}

We model the prior probability density of $\mathbf{A}$ by assuming
that $\boldsymbol{\alpha}_{i}$ is a Markov random field (MRF). That
is, we treat the image grid as an undirected graph $\mathcal{G}=\left(\mathcal{V},\,\mathcal{E}\right)$
where $\mathcal{V}$ is the set of graph nodes and $\mathcal{E}$
is the set of edges. The density of the whole grid can be modeled
based on a potential function of the neighboring nodes. Suppose the
hyperspectral image is divided into different regions ($\mathcal{D}=\bigcup_{k=1}^{S}\Omega_{k}$,
$\Omega_{i}\bigcap\Omega_{j}=\emptyset$ when $i\neq j$) with the
pixels of a region showing similar reflectances, we have $S$ sets
of graph nodes $\mathcal{V}_{k}$, $k=1,...,S$. Then, the prior probability
density of $\mathbf{A}$ can be assumed to be in favor of smooth assignment
of $\boldsymbol{\alpha}_{i}$ to all the neighboring pixels within
a region, because they are more likely to be the same mixture of materials
and their abundances should be similar. 

Driven by this intuition, the prior probability density of $\mathbf{A}$
is modeled as
\begin{eqnarray}
p(\mathbf{A}) & \propto & \exp\left\{ -\frac{\beta_{1}}{4}\sum_{i=1}^{N}\sum_{j=1}^{N}w_{ij}\Vert\boldsymbol{\alpha}_{i}-\boldsymbol{\alpha}_{j}\Vert^{2}\right\} \nonumber \\
 & = & \exp\left\{ -\frac{\beta_{1}}{2}\text{Tr}\left(\mathbf{A}^{T}\mathbf{L}\mathbf{A}\right)\right\} ,\label{eq:APdf1}
\end{eqnarray}
where $w_{ij}$ controls the spatial intimacy between node $i$ and
node $j$, $\mathbf{L}=\left[\delta_{ij}\sum_{k}w_{ik}\right]-\left[w_{ij}\right]$
where $\mathbf{L}\in\mathbb{R}^{N\times N}$ is the well known symmetric
positive semidefinite \emph{graph Laplacian} matrix \cite{von2007tutorial}.
If we have a prior segmentation result, we can set $w_{ij}$ to be
1 when node $i$ and node $j$ are neighbors that belong to the same
region ${\cal V}_{k}$ and 0 otherwise. If we do not have a prior
segmentation, we can use 
\[
w_{ij}=e^{-\Vert\mathbf{y}_{i}-\mathbf{y}_{j}\Vert^{2}/2B\eta^{2}},
\]
when node $i$ and node $j$ are neighbors and 0 otherwise. From the
functional point of view, equation~(\ref{eq:APdf1}) can be seen
as trying to minimize $\sum_{j,k}\iint_{\Omega_{k}}\Vert\nabla\alpha_{j}\left(\mathbf{x}\right)\Vert^{2}d\mathbf{x}$
using a known segmentation. A similar graph regularizer is used in
\cite{cai2011graph,lu2013manifold}. However, in their work, only
pixel reflectances are used to construct the graph following the manifold
structure with no spatial information being incorporated.

In practice, a region may contain a pure material, which means the
abundance map for many pixels has its power concentrated on a single
component, e.g., $\alpha_{ij}=1$, $\alpha_{ik}=0$ for $k\neq j$.
This suggests that $\mathbf{A}$ should have a higher prior probability
for each $\boldsymbol{\alpha}_{i}$ being sparse. A common sparsity
promoting technique is to minimize the $L_{1}$ norm on $\boldsymbol{\alpha}_{i}$,
which is not applicable here due to the sum-to-one constraint. A previous
work uses the $L_{1/2}$ norm $\sum_{i,j}\alpha_{ij}^{1/2}$ to promote
sparsity \cite{qian2011hyperspectral}. However, the non-smooth objective
requires us to take subgradients which we would prefer to avoid. Here,
we introduce a quadratic form $\Vert\boldsymbol{\alpha}_{i}\Vert^{2}$,
which by itself is not sparsity promoting, but does have that effect
when maximized subject to the simplex constraint. Figure~\ref{fig:sparsity_promoting_effect}
shows the sparsity promoting effect if we want to maximize $\Vert\boldsymbol{\alpha}_{i}\Vert^{2}$
subject to the simplex constraint when $M=2$. For $M>2$, a similar
result can be achieved. Hence, we can add $\sum_{i}\Vert\boldsymbol{\alpha}_{i}\Vert^{2}$
to (\ref{eq:APdf1}) and have a prior probability defined as
\begin{eqnarray}
p\left(\mathbf{A}\right) & \propto & \exp\left\{ -\frac{\beta_{1}}{2}\text{Tr}\left(\mathbf{A}^{T}\mathbf{L}\mathbf{A}\right)+\frac{\beta_{2}}{2}\text{Tr}\left(\mathbf{A}^{T}\mathbf{A}\right)\right\} \nonumber \\
 & = & \exp\left\{ -\frac{\beta_{1}}{2}\text{Tr}\left(\mathbf{A}^{T}\mathbf{K}\mathbf{A}\right)\right\} ,\label{eq:APdf}
\end{eqnarray}
where $\mathbf{K}=\mathbf{L}-\frac{\beta_{2}}{\beta_{1}}\mathbf{I}_{N}$
if $\beta_{1}\neq0$.

\begin{figure}
\begin{centering}
\includegraphics[width=1\textwidth]{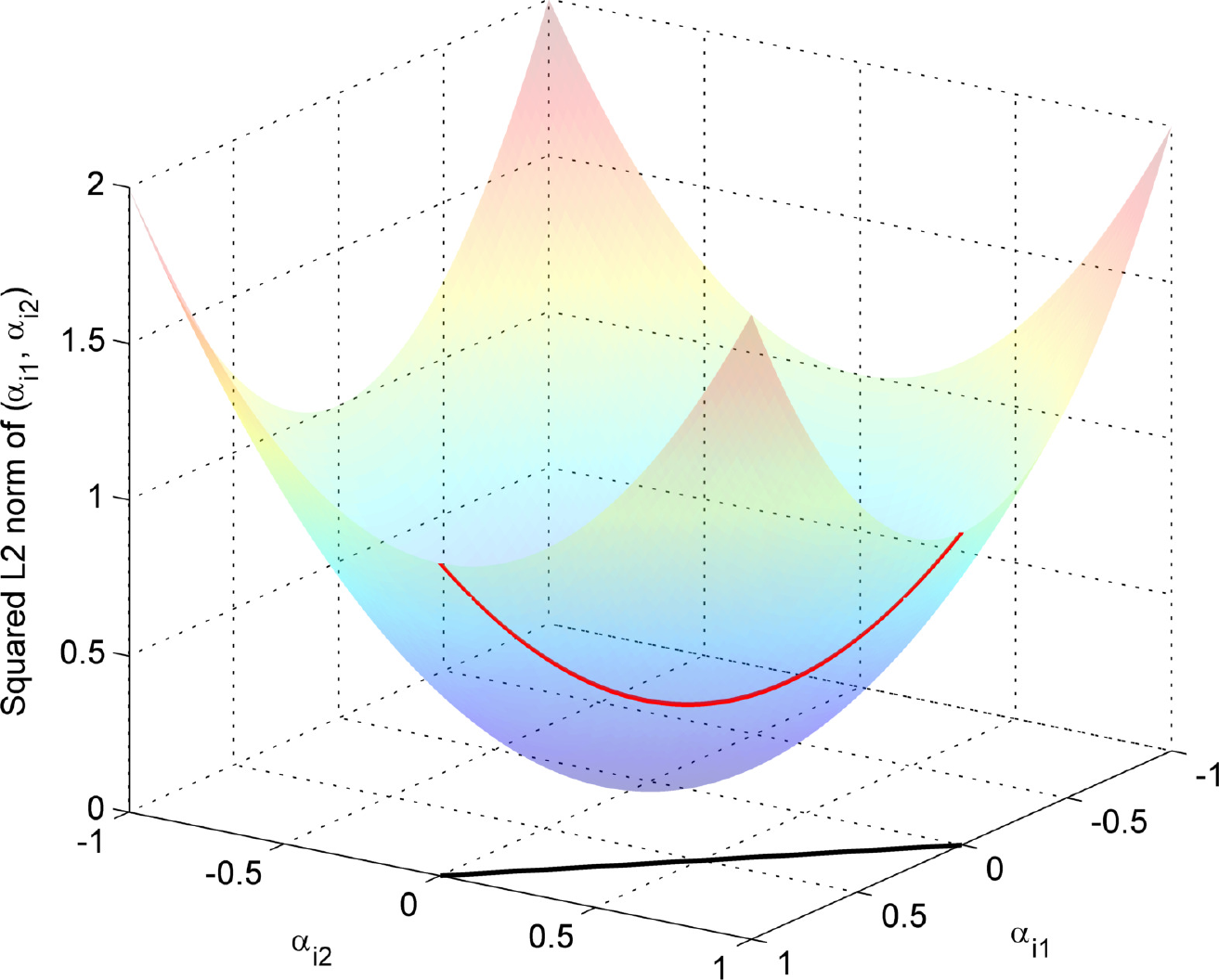}
\par\end{centering}

\caption{The sparsity promoting effect of maximizing $\Vert\boldsymbol{\alpha}_{i}\Vert^{2}$
subject to the simplex constraint when $M=2$. The black line segment
in the plane $z=0$ is the simplex constraint. The red line on the
paraboloid is the projected values of $\Vert\boldsymbol{\alpha}_{i}\Vert^{2}$
from the simplex. Maximizing $\Vert\boldsymbol{\alpha}_{i}\Vert^{2}$
will lead to solutions corresponding to the extreme ends of the simplex
(sparse solution).}

\label{fig:sparsity_promoting_effect}
\end{figure}

The parameters $\mathbf{r}_{j}$ can also be assumed to be drawn from
suitable prior distributions. From the analysis of Figure~\ref{fig:difficulties}
to obtain a unique solution, we assume that the endmembers should
tightly surround the mixed pixels. From the characteristics of function
representation, they should also be smooth. So we introduce two types
of proximity to model the density function of $\mathbf{R}$. The first
makes every two endmembers close to each other, which is also used
in \cite{berman2004ice,zare2013piecewise}. The second makes the adjacent
wavelengths have similar values for each endmember. This can be done
by using a density function on $\mathbf{R}$ as 
\begin{eqnarray}
p(\mathbf{R}) & \propto & \exp\left\{ -\frac{\rho_{1}}{4}\sum_{i=1}^{M}\sum_{j=1}^{M}u_{ij}\Vert\mathbf{r}_{i}-\mathbf{r}_{j}\Vert^{2}\right\} \times\nonumber \\
 &  & \text{exp}\left\{ -\frac{\rho_{2}}{4}\sum_{k=1}^{B}\sum_{l=1}^{B}v_{kl}\Vert\mathbf{r}^{k}-\mathbf{r}^{l}\Vert^{2}\right\} \label{eq:RPdf}
\end{eqnarray}
where $\mathbf{r}^{k}$ denotes the $k$th column of $\mathbf{R}$.
$u_{ij}=1$ for all $i$ and $j$ so the first term fulfills the first
sense of proximity. $v_{kl}$ is 1 when $\vert k-l\vert=1$ and 0
otherwise so it actually numerically minimizes $\sum_{j}\int\left(f_{j}^{\prime}(\lambda)\right)^{2}d\lambda$
and satisfies the second proximity. Similar to (\ref{eq:APdf}), (\ref{eq:RPdf})
can be written as
\begin{equation}
p(\mathbf{R})\propto\text{exp}\left\{ -\frac{\rho_{1}}{2}\text{Tr}\left(\mathbf{R}^{T}\mathbf{H}\mathbf{R}\right)-\frac{\rho_{2}}{2}\text{Tr}\left(\mathbf{R}\mathbf{G}\mathbf{R}^{T}\right)\right\} ,\label{eq:R1Pdf}
\end{equation}
where $\mathbf{H}\in\mathbb{R}^{M\times M}$ and $\mathbf{G}\in\mathbb{R}^{B\times B}$
are the corresponding Laplacian matrices ($\mathbf{H}$ has -1 everywhere
except for diagonals with value $M-1$, $\mathbf{G}$ has almost every
diagonal element as 2, except for 1 in $v_{11}$ and $v_{BB}$, and
-1 in the adjacent diagonals).

\subsection{Maximizing the posterior}

From the prior probability density in (\ref{eq:APdf}), (\ref{eq:R1Pdf})
and the conditional probability density in (\ref{eq:YCondPdf}), we
invoke Bayes' theorem to get the posterior probability density with
a view toward maximizing the posterior. Standard algebra yields

\[
p(\mathbf{R},\boldsymbol{\Theta},\mathbf{A},\mu\vert\mathbf{Y})\propto p(\mathbf{A})p(\mathbf{R})p(\mathbf{Y}\vert\mathbf{R},\boldsymbol{\Theta},\mathbf{A},\mu)
\]
where $p(\boldsymbol{\Theta})$, $p(\mu)$ and $p(\mathbf{Y})$ are
assumed to follow a uniform distribution. Maximizing $\log p\left(\mathbf{R},\boldsymbol{\Theta},\mathbf{A},\mu\vert\mathbf{Y}\right)$
is equivalent to minimizing 
\begin{align}
\mathcal{E}\left(\mathbf{R},\boldsymbol{\Theta},\mathbf{A},\mu\right)= & \text{vec}\left((\mathbf{Y}-\mathbf{AR})^{T}\right)^{T}\boldsymbol{\Sigma}_{\mathbf{Y}}^{-1}\text{vec}\left((\mathbf{Y}-\mathbf{AR})^{T}\right)+\log\vert\boldsymbol{\Sigma}_{\mathbf{Y}}\vert\nonumber \\
 & +\beta_{1}\text{Tr}\left(\mathbf{A}^{T}\mathbf{K}\mathbf{A}\right)+\rho_{1}\text{Tr}\left(\mathbf{R}^{T}\mathbf{H}\mathbf{R}\right)+\rho_{2}\text{Tr}\left(\mathbf{R}\mathbf{G}\mathbf{R}^{T}\right)\label{eq:E1}
\end{align}
where $\boldsymbol{\Sigma}_{\mathbf{Y}}$ is given in (\ref{eq:Def-SigmaY}).
Notice that the first term in (\ref{eq:E1}) involves inversion of
a large non-sparse $NB$ by $NB$ matrix, which is computationally
expensive. We now describe methods to reduce the complexity.

Using the \emph{Woodbury identity}, $\boldsymbol{\Sigma}_{\mathbf{Y}}^{-1}$
becomes 
\begin{eqnarray}
\boldsymbol{\Sigma}_{\mathbf{Y}}^{-1} & = & \left(\mu^{2}\mathbf{I}_{NB}+(\mathbf{A}\otimes\mathbf{I}_{B})[\delta_{ij}\boldsymbol{\Sigma}_{j}](\mathbf{A}\otimes\mathbf{I}_{B})^{T}\right)^{-1}\nonumber \\
 & = & \mu^{-2}\mathbf{I}_{NB}-\mu^{-2}\left(\mathbf{A}\otimes\mathbf{I}_{B}\right)\mathbf{Q}^{-1}\left(\mathbf{A}\otimes\mathbf{I}_{B}\right)^{T}\label{eq:sigmaInv}
\end{eqnarray}
where 
\begin{eqnarray}
\mathbf{Q} & = & \mu^{2}\left[\delta_{ij}\boldsymbol{\Sigma}_{j}\right]^{-1}+\left(\mathbf{A}\otimes\mathbf{I}_{B}\right)^{T}\left(\mathbf{A}\otimes\mathbf{I}_{B}\right)\nonumber \\
 & = & \mu^{2}\left[\delta_{ij}\boldsymbol{\Sigma}_{j}^{-1}\right]+\left(\mathbf{A}^{T}\otimes\mathbf{I}_{B}\right)\left(\mathbf{A}\otimes\mathbf{I}_{B}\right)\nonumber \\
 & = & \left[\delta_{ij}\mathbf{S}_{j}\right]+\mathbf{A}^{T}\mathbf{A}\otimes\mathbf{I}_{B}\label{eq:Def-T}
\end{eqnarray}
with $\mathbf{S}_{j}=\mu^{2}\boldsymbol{\Sigma}_{j}^{-1}$. Note that
$\mathbf{A}^{T}\mathbf{A}\otimes\mathbf{I}_{B}$ is a positive semidefinite
matrix and therefore $\mathbf{Q}\in\text{SPD}\left(MB\right)$ ($\boldsymbol{\Sigma}_{j}\in\text{SPD}\left(B\right)$).
Plugging (\ref{eq:sigmaInv}) into the first term of the objective
function leads to 
\begin{eqnarray}
 &  & \text{vec}\left((\mathbf{Y}-\mathbf{AR})^{T}\right)^{T}\boldsymbol{\Sigma}_{\mathbf{Y}}^{-1}\text{vec}\left((\mathbf{Y}-\mathbf{AR})^{T}\right)\nonumber \\
 & = & \mu^{-2}\Vert\mathbf{Y}-\mathbf{A}\mathbf{R}\Vert_{F}^{2}-\mu^{-2}\mathbf{z}^{T}\mathbf{Q}^{-1}\mathbf{z},\label{eq:E5-1}
\end{eqnarray}
where 
\begin{eqnarray*}
\mathbf{z} & = & \left(\mathbf{A}\otimes\mathbf{I}_{B}\right)^{T}\text{vec}\left(\left(\mathbf{Y}-\mathbf{A}\mathbf{R}\right)^{T}\right)\\
 & = & \left(\mathbf{A}^{T}\otimes\mathbf{I}_{B}\right)\text{vec}\left(\left(\mathbf{Y}-\mathbf{A}\mathbf{R}\right)^{T}\right)\\
 & = & \text{vec}\left(\left(\mathbf{Y}-\mathbf{A}\mathbf{R}\right)^{T}\mathbf{A}\right).
\end{eqnarray*}
From \emph{Sylvester's determinant theorem}, the logarithm term $\text{log}\left|\boldsymbol{\Sigma}_{\mathbf{Y}}\right|$
becomes
\begin{eqnarray}
 &  & \text{log}\left|\mu^{2}\mathbf{I}_{NB}+\left(\mathbf{A}\otimes\mathbf{I}_{B}\right)\left[\delta_{ij}\boldsymbol{\Sigma}_{j}\right]\left(\mathbf{A}\otimes\mathbf{I}_{B}\right)^{T}\right|\nonumber \\
 & = & \text{log}\mu^{2NB}\left|\mathbf{I}_{MB}+\mu^{-2}\left[\delta_{ij}\boldsymbol{\Sigma}_{j}\right]\left(\mathbf{A}\otimes\mathbf{I}_{B}\right)^{T}\left(\mathbf{A}\otimes\mathbf{I}_{B}\right)\right|\nonumber \\
 & = & \text{log}\mu^{2NB}\left|\mu^{-2}\left[\delta_{ij}\boldsymbol{\Sigma}_{j}\right]\right|\left|\mu^{2}\left[\delta_{ij}\boldsymbol{\Sigma}_{j}\right]^{-1}+\mathbf{A}^{T}\mathbf{A}\otimes\mathbf{I}_{B}\right|\nonumber \\
 & = & NB\text{log}\mu^{2}-\text{log}\left|\left[\delta_{ij}\mathbf{S}_{j}\right]\right|+\text{log}\left|\mathbf{Q}\right|.\label{eq:E5-2}
\end{eqnarray}

Combining the results in (\ref{eq:E5-1}) and (\ref{eq:E5-2}), and
letting $\gamma=\mu^{-2}$, minimizing (\ref{eq:E1}) becomes equivalent
to minimizing $\mathcal{E}_{1}\left(\mathbf{R},\mathbf{A},\gamma,\left\{ \mathbf{S}_{j}\right\} \right)$
as
\begin{eqnarray}
\mathcal{E}_{1}(\mathbf{R},\mathbf{A},\gamma,\left\{ \mathbf{S}_{j}\right\} ) & = & \gamma\Vert\mathbf{Y}-\mathbf{A}\mathbf{R}\Vert_{F}^{2}-\gamma\mathbf{z}^{T}\mathbf{Q}^{-1}\mathbf{z}+\text{log}\left|\mathbf{Q}\right|\nonumber \\
 &  & -\sum_{j=1}^{M}\text{log}\left|\mathbf{S}_{j}\right|-NB\text{log}\gamma+\beta_{1}\text{Tr}\left(\mathbf{A}^{T}\mathbf{K}\mathbf{A}\right)\nonumber \\
 &  & +\rho_{1}\text{Tr}\left(\mathbf{R}^{T}\mathbf{H}\mathbf{R}\right)+\rho_{2}\text{Tr}\left(\mathbf{R}\mathbf{G}\mathbf{R}^{T}\right)\label{eq:E6}
\end{eqnarray}
subject to
\begin{gather}
\mathbf{A}\geq0,\,\mathbf{A}\mathbf{1}_{M}=\mathbf{1}_{N},\,\mathbf{R}\ge0,\,\mathbf{S}_{j}\in\text{SPD}\left(B\right)\label{eq:cons}
\end{gather}
where $\mathbf{Q}=\left[\delta_{ij}\mathbf{S}_{j}\right]+\mathbf{A}^{T}\mathbf{A}\otimes\mathbf{I}_{B}$,
$\mathbf{z}=\text{vec}\left(\left(\mathbf{Y}-\mathbf{A}\mathbf{R}\right)^{T}\mathbf{A}\right)$.
Note that letting $\boldsymbol{\Sigma}_{j}\rightarrow\mathbf{0}$
(i.e. there is little endmember uncertainty) will result in $\mathbf{S}_{j}$
tending to infinity and $\gamma\mathbf{z}^{T}\mathbf{Q}^{-1}\mathbf{z}$
vanishing, resulting in $\text{log}\left|\mathbf{Q}\right|$ canceling
$\sum_{j=1}^{M}\text{log}\left|\mathbf{S}_{j}\right|$ as $\left[\delta_{ij}\mathbf{S}_{j}\right]$
dominates $\mathbf{Q}$. Thus the entire objective function reduces
to the widely used least squares objective.

\subsection{Optimizing the objective function}

The objective function (\ref{eq:E6}) is not convex. Given an initial
condition, we can use the block coordinate descent method to find
a suitable local minimum (please Section 2.7 in \cite{bertsekas1999nonlinear}).
That is, for $n=0,1,2,...$, $\mathbf{A},\,\mathbf{R}$ and $\gamma,\,\left\{ \mathbf{S}_{j}\right\} $
are alternately updated by 
\[
\mathbf{A}^{n+1},\,\mathbf{R}^{n+1}=\arg\min_{\mathbf{R},\mathbf{A}}\mathcal{E}_{1}\left(\mathbf{R},\mathbf{A},\gamma^{n},\left\{ \mathbf{S}_{j}^{n}\right\} \right),
\]
\[
\gamma^{n+1},\,\left\{ \mathbf{S}_{j}^{n+1}\right\} =\arg\min_{\gamma,\left\{ \mathbf{S}_{j}\right\} }\mathcal{E}_{1}\left(\mathbf{R}^{n+1},\mathbf{A}^{n+1},\gamma,\left\{ \mathbf{S}_{j}\right\} \right),
\]
subject to the constraints (\ref{eq:cons}). We show in the Appendix
that $\gamma\mathbf{z}^{T}\mathbf{Q}^{-1}\mathbf{z}$ and $\text{log}\left|\mathbf{Q}\right|-\sum_{j=1}^{M}\text{log}\left|\mathbf{S}_{j}\right|$
are positive and are small compared to $\gamma\Vert\mathbf{Y}-\mathbf{A}\mathbf{R}\Vert_{F}^{2}$
(also verified in the experiments to follow). Hence, when minimizing
$\mathcal{E}_{1}$ with respect to $\mathbf{A}$ and $\mathbf{R}$,
we can ignore these terms and instead minimize $\mathcal{E}_{2}\left(\mathbf{R},\mathbf{A}\right)$:
\begin{eqnarray}
\mathcal{E}_{2} & = & \Vert\mathbf{Y}-\mathbf{A}\mathbf{R}\Vert_{F}^{2}+\frac{\beta_{1}}{\gamma}\text{Tr}\left(\mathbf{A}^{T}\mathbf{K}\mathbf{A}\right)\nonumber \\
 &  & +\frac{\rho_{1}}{\gamma}\text{Tr}\left(\mathbf{R}^{T}\mathbf{H}\mathbf{R}\right)+\frac{\rho_{2}}{\gamma}\text{Tr}\left(\mathbf{R}\mathbf{G}\mathbf{R}^{T}\right).\label{eq:E7}
\end{eqnarray}
Also, in this case, $\gamma$ will not impact the solution of $\mathbf{A}$
and $\mathbf{R}$ because the ratio of parameters, e.g. $\beta_{1}/\gamma$,
will become the new parameters to tune. Assume when optimizing with
respect to $\mathbf{A},\,\mathbf{R}$, $\gamma$ is the optimal value.
The subsequent optimization with respect to $\gamma,\,\left\{ \mathbf{S}_{j}\right\} $
will not change the next iteration result of $\mathbf{A},\,\mathbf{R}$,
which in turn keeps $\gamma,\,\left\{ \mathbf{S}_{j}\right\} $ unchanged.
So the block coordinate descent becomes a simple two step algorithm
where the first step minimizes (\ref{eq:E7}) with respect to $\mathbf{A},\,\mathbf{R}$
and the second step minimizes (\ref{eq:E6}) with respect to $\gamma,\,\left\{ \mathbf{S}_{j}\right\} $
given the obtained $\mathbf{A},\,\mathbf{R}$. Note that again both
of them are optimizations over convex sets ($\mathbf{A}$ is restricted
to the Cartesian product of simplices, $\mathbf{S}_{j}$ is restricted
to the convex cone of positive definite matrices) and so gradient
projection methods can be used to solve these kind of problems (please
see Section 2.3 in \cite{bertsekas1999nonlinear}).

Though the objective function (\ref{eq:E7}) is not convex, it is
convex with respect to either $\mathbf{A}$ or $\mathbf{R}$ (e.g.
it can be written as a quadratic function with respect to $\mathbf{A}$:
$\frac{1}{2}\mathbf{x}^{T}\mathbf{Q}\mathbf{x}+\mathbf{b}^{T}\mathbf{x}$
where $\mathbf{x}=\text{vec}\left(\mathbf{A}\right)$, $\mathbf{Q}=\mathbf{R}\mathbf{R}^{T}\otimes\mathbf{I}_{N}+\frac{\beta_{1}}{\gamma}\mathbf{I}_{M}\otimes\mathbf{K}$,
$\mathbf{b}=-\text{vec}\left(\mathbf{Y}\mathbf{R}^{T}\right)$). We
can alternately update $\mathbf{A}$ and $\mathbf{R}$ to reduce the
energy. Taking derivatives of (\ref{eq:E7}) with respect to $\mathbf{A}$,
we have
\begin{equation}
\frac{\partial\mathcal{E}_{2}}{\partial\mathbf{A}}=2\left(-\mathbf{Y}\mathbf{R}^{T}+\mathbf{A}\mathbf{R}\mathbf{R}^{T}+\frac{\beta_{1}}{\gamma}\mathbf{KA}\right).\label{eq:derA}
\end{equation}
The gradient projection method sets the value of the next iteration,
$\mathbf{A}^{n+1}$, to be the projected value of the steepest descent
\begin{equation}
\mathbf{A}^{n+1}=\phi\left(\mathbf{A}^{n}-\frac{\tau^{n}}{2}\frac{\partial\mathcal{E}_{2}}{\partial\mathbf{A}}\left(\mathbf{R}^{n},\mathbf{A}^{n}\right)\right),\label{eq:optimalA}
\end{equation}
where
\[
\phi:\,\mathbf{X}\mapsto\arg\min_{\mathbf{Y}\in\mathbb{R}^{N\times M}}\Vert\mathbf{X}-\mathbf{Y}\Vert_{F}^{2}\,\text{s.t. }\mathbf{Y}\ge0,\,\mathbf{Y}\mathbf{1}_{M}=\mathbf{1}_{N}
\]
projects a matrix to the nearest matrix that satisfies the simplex
constraint (e.g. we use the algorithm of Figure 1 in \cite{duchi2008efficient}).
$\tau^{n}>0$ is the step size and is set by 1D minimization or the
familiar Armijo rule. It is shown that the sequence generated by (\ref{eq:optimalA})
is gradient related, i.e. $\left\langle \frac{\partial\mathcal{E}_{2}}{\partial\mathbf{A}}\left(\mathbf{R}^{n},\mathbf{A}^{n}\right),\mathbf{A}^{n+1}-\mathbf{A}^{n}\right\rangle <0$
(Proposition 2.3.1 in \cite{bertsekas1999nonlinear}), which leads
to a stationary point given proper step sizes $\tau^{n}$ such as
the exact line minimization of \cite{hager2004gradient},
\[
\tau^{n}=\arg\min_{\tau\ge0}\mathcal{E}_{2}\left(\mathbf{R}^{n},\phi\left(\mathbf{A}^{n}-\frac{\tau}{2}\frac{\partial\mathcal{E}_{2}}{\partial\mathbf{A}}\left(\mathbf{R}^{n},\mathbf{A}^{n}\right)\right)\right).
\]
Numerically, we can use adaptive step sizes that start with a small
step and gradually increase it by an order of magnitude until $\mathcal{E}_{2}$
starts increasing. Similar gradient descent methods were proposed
in \cite{guan2011manifold,lin2007projected}, and it is shown that
such methods have a faster convergence rate than those based on multiplicative
update rules \cite{cai2011graph,lu2013manifold}.

Once we have updated $\mathbf{A}$, we can update $\mathbf{R}$ by
finding a new value that reduces (\ref{eq:E7}). A gradient projection
method can also be used for $\mathbf{R}$ because of the positivity
constraint. However, the introduction of spatial smoothness and the
sparsity promoting term, $\text{Tr}\left(\mathbf{A}^{T}\mathbf{K}\mathbf{A}\right)$,
along with the pairwise closeness term actually make $\mathbf{R}$
seldom negative even when just using a closed form solution. Taking
derivatives of (\ref{eq:E7}) with respect to $\mathbf{R}$, we have
\[
\frac{\partial\mathcal{E}_{2}}{\partial\mathbf{R}}=2\left(\mathbf{A}^{T}\mathbf{A}\mathbf{R}-\mathbf{A}^{T}\mathbf{Y}+\frac{\rho_{1}}{\gamma}\mathbf{HR}+\frac{\rho_{2}}{\gamma}\mathbf{RG}\right).
\]
Letting $\frac{\partial\mathcal{E}_{2}}{\partial\mathbf{R}}=0$, we
obtain a closed form solution for $\mathbf{R}$ that ignores the positivity
constraint, 
\begin{equation}
\left(\mathbf{A}^{T}\mathbf{A}+\frac{\rho_{1}}{\gamma}\mathbf{H}\right)\mathbf{R}+\frac{\rho_{2}}{\gamma}\mathbf{RG}=\mathbf{A}^{T}\mathbf{Y}.\label{eq:optimalR}
\end{equation}
Equation (\ref{eq:optimalR}) is called a \emph{Sylvester equation}
in control theory which is normally solved by performing a Schur decomposition
of the two matrices before and after $\mathbf{R}$ and back substitution
of the resulting equations \cite{bartels1972solution}. Given an initial
condition, we can alternately update $\mathbf{A}$ and $\mathbf{R}$
based on (\ref{eq:optimalA}) and (\ref{eq:optimalR}). The details
are given in the first two steps in Algorithm~\ref{Algo1}. Since
at each step the energy is lowered, the algorithm will lead to a local
minimum. 

Given the estimated endmembers and abundances, we can find $\gamma,\,\left\{ \mathbf{S}_{j}\right\} $
(hence $\mu$ and $\boldsymbol{\Sigma}_{j}$) similarly. Taking derivatives
of (\ref{eq:E6}) with respect to $\gamma$ and setting it to zero,
we have 
\begin{equation}
\gamma^{-1}=\frac{1}{NB}\left\{ \Vert\mathbf{Y}-\mathbf{A}\mathbf{R}\Vert_{F}^{2}-\mathbf{z}^{T}\mathbf{Q}^{-1}\mathbf{z}\right\} .\label{eq:optimalGamma1}
\end{equation}
Note that the right hand side of (\ref{eq:optimalGamma1}) is always
greater than 0 from (\ref{eq:E5-1}). Using the chain rule in matrix
form to take derivatives of (\ref{eq:E6}) with respect to $\mathbf{S}_{j}$,
we have 
\begin{equation}
\frac{\partial\mathcal{E}_{1}}{\partial\mathbf{S}_{j}}=\left(\gamma\mathbf{Q}^{-1}\mathbf{z}\mathbf{z}^{T}\mathbf{Q}^{-1}\right)_{j}-\mathbf{S}_{j}^{-1}+\left(\mathbf{Q}^{-1}\right)_{j},\label{eq:derSigma_j}
\end{equation}
where $\left(\cdot\right)_{j}$ denotes the extraction of the $j$th
diagonal $B$ by $B$ block of the $MB$ by $MB$ matrix. Hence, we
can alternately update $\gamma$ and $\mathbf{S}_{j}$ to minimize
(\ref{eq:E6}) keeping $\mathbf{A},\,\mathbf{R}$ fixed. For updating
$\mathbf{S}_{j}$, a similar gradient projection method as (\ref{eq:optimalA})
can be used, where the projection onto the set of positive definite
matrices is obtained by truncating the eigenvalues\cite{absil2012projection}.
The details are given in Step 3 of Algorithm~\ref{Algo1}.

\begin{algorithm}
\caption{The implementation of SCM}

Input: $\mathbf{Y}=\left[\mathbf{y}_{1},...,\mathbf{y}_{N}\right]^{T}$,
$M$, $\eta$, $\beta_{1}^{\prime}$, $\beta_{2}^{\prime}$, $\rho_{1}^{\prime}$,
$\rho_{2}^{\prime}$, $\sigma_{0}$, $\sigma_{\text{max}}$.
\begin{itemize}
\item Step 1: Initialize $\frac{\beta_{1}}{\gamma}=\frac{B}{M}\beta_{1}^{\prime}$,
$\frac{\beta_{2}}{\gamma}=\frac{B}{M}\beta_{2}^{\prime}$, $\frac{\rho_{1}}{\gamma}=\frac{N}{M^{2}}\rho_{1}^{\prime}$,
$\frac{\rho_{2}}{\gamma}=\frac{N}{M}\rho_{2}^{\prime}$\textsuperscript{1}.
Construct the Laplacian matrices $\mathbf{L}$, $\mathbf{H}$ and
$\mathbf{G}$. 
\item Step 2: Initialize $\mathbf{R}$ to be the centers of $M$ clusters
of $\mathbf{Y}$ by K-means\textsuperscript{2}. Initialize $\mathbf{A}=\phi\left(\mathbf{Y}\mathbf{R}^{T}\left(\mathbf{R}\mathbf{R}^{T}+\epsilon\mathbf{I}_{M}\right)^{-1}\right)$,
where $\phi\left(\mathbf{A}\right):\mathbf{A}\mapsto\left[\max\left(\alpha_{ij}-\theta_{i},0\right)\right]$
where $\theta_{i}=\frac{1}{K_{i}}\left(\sum_{k=1}^{K_{i}}\alpha_{ik}^{\prime}-1\right)$,
$\alpha_{i1}^{\prime}\geq...\geq\alpha_{iM}^{\prime}$ are sorted
$\alpha_{i1},...,\alpha_{iM}$, $K_{i}$ is the largest $k$ such
that $\alpha_{ik}^{\prime}-\frac{1}{k}\left(\sum_{l=1}^{k}\alpha_{il}^{\prime}-1\right)>0$.
Solve $\mathbf{A},\,\mathbf{R}$ by repeating the following two steps
until convergence. 

\begin{itemize}
\item Update $\mathbf{A}$ by $\phi_{A}\left(\tau\right)=\phi\left(\mathbf{A}-\frac{\tau}{2}\frac{\partial\mathcal{E}_{2}}{\partial\mathbf{A}}\right)$,
where $\frac{\partial\mathcal{E}_{2}}{\partial\mathbf{A}}$ is given
in (\ref{eq:derA}). If $\mathcal{E}_{2}\left(\mathbf{R},\phi_{A}\left(\tau_{\epsilon}\right)\right)<\mathcal{E}_{2}\left(\mathbf{R},\mathbf{A}\right)$,
$\tau$ attempts $\tau_{\epsilon}10^{i}$, $i=0,1,2,...$ until $\mathcal{E}_{2}\left(\mathbf{R},\phi_{A}\left(\tau_{\epsilon}10^{i+1}\right)\right)\ge\mathcal{E}_{2}\left(\mathbf{R},\phi_{A}\left(\tau_{\epsilon}10^{i}\right)\right)$,
otherwise set $\tau=0$.
\item Update $\mathbf{R}$ by solving (\ref{eq:optimalR})\textsuperscript{3}.
\end{itemize}
\item Step 3: Initialize $\gamma^{-1}=\frac{1}{NB}\Vert\mathbf{Y}-\mathbf{A}\mathbf{R}\Vert_{F}^{2}$,
$\boldsymbol{\Sigma}_{j}=\sigma_{0}^{2}\mathbf{I}_{B}$. Define $\psi\left(\mathbf{X}\right):\mathbf{X}\mapsto\mathbf{U}\left[\delta_{ij}\text{max}\left(\lambda_{i},1/\gamma\sigma_{\text{max}}^{2}\right)\right]\mathbf{U}^{T}$
where $\mathbf{X}=\mathbf{U}\left[\delta_{ij}\lambda_{i}\right]\mathbf{U}^{T}$
is the eigendecomposition. Solve $\gamma,\,\left\{ \mathbf{S}_{j}\right\} $
by repeating the following two steps until convergence.

\begin{itemize}
\item Update $\mathbf{S}_{j}$ by $\psi_{j}\left(\tau\right)=\psi\left(\mathbf{S}_{j}-\tau\frac{\partial\mathcal{E}_{1}}{\partial\mathbf{S}_{j}}\right)$
for $j=1,...,M$, where $\frac{\partial\mathcal{E}_{1}}{\partial\mathbf{S}_{j}}$
is given in (\ref{eq:derSigma_j}). The step size $\tau$ is determined
similar to step 2.
\item Update $\gamma$ by (\ref{eq:optimalGamma1}).
\end{itemize}
\end{itemize}
Output: $\mathbf{A}$, $\mathbf{R}$, $\mu=\gamma^{-1/2}$, $\boldsymbol{\Sigma}_{j}=\mu^{2}\mathbf{S}_{j}^{-1}$.

\label{Algo1}
\end{algorithm}

\emph{Remark 1}. The choice of free parameters should be invariant
with respect to the changing magnitude of each term in (\ref{eq:E7})
with different $N$, $M$ and $B$. For example, the first term in
(\ref{eq:E7}) has a magnitude of $NB$. From the banded diagonal
nature of $\left[w_{ij}\right]$ in (\ref{eq:APdf}), $\text{Tr}\left(\mathbf{A}^{T}\mathbf{L}\mathbf{A}\right)$
has a magnitude of $NM$. So the parameter $\beta_{1}$ should have
a magnitude of $\beta_{1}/\gamma=\beta_{1}^{\prime}B/M$. Similarly,
the parameters $\beta_{2}$, $\rho_{1}$ and $\rho_{2}$ should have
magnitudes according to $\beta_{2}/\gamma=\beta_{2}^{\prime}B/M$,
$\rho_{1}/\gamma=\rho_{1}^{\prime}N/M^{2}$, $\rho_{2}/\gamma=\rho_{2}^{\prime}N/M$.

\emph{Remark 2}. The initial endmembers are important in endmember
estimation. Randomly picking pixels and fast algorithms such as VCA
can provide an initial estimate. We find that K-means works well in
practical applications. This could be due to the fact that K-means
can pre-segment the image to obtain the mean values of different regions.

\emph{Remark 3}. We can resort to the classical Krylov subspace method
to solve the transposed version of (\ref{eq:optimalR}) more efficiently
\cite{el2002block}. That is, for $\mathbf{A}\mathbf{X}+\mathbf{X}\mathbf{B}=\mathbf{C}$
where $\mathbf{A}\in\mathbb{R}^{B\times B}$, $\mathbf{B}\in\mathbb{R}^{M\times M}$,
$\mathbf{X},\,\mathbf{C}\in\mathbb{R}^{B\times M}$, $B\gg M$, $\mathbf{A}$
is sparse and $\mathbf{B}$ is symmetric, the eigendecomposition $\mathbf{B}=\mathbf{U}\boldsymbol{\Lambda}\mathbf{U}^{T}$
($\boldsymbol{\Lambda}=\left[\delta_{ij}\lambda_{i}\right]$ consists
of eigenvalues) shows that it is equivalent to $\mathbf{A}\mathbf{Y}+\mathbf{Y}\boldsymbol{\Lambda}=\mathbf{D}$
where $\mathbf{Y}=\mathbf{X}\mathbf{U}$, $\mathbf{D}=\mathbf{C}\mathbf{U}$.
Let $\mathbf{Y}=\left[\mathbf{y}_{1},...,\mathbf{y}_{M}\right]$,
$\mathbf{D}=\left[\mathbf{d}_{1},...,\mathbf{d}_{M}\right]$, we have
a linear system of equations $\left(\mathbf{A}+\lambda_{i}\mathbf{I}\right)\mathbf{y}_{i}=\mathbf{d}_{i}$
for each column, where $\mathbf{y}_{i}$ can be solved independently
and efficiently since $\mathbf{A}$ is sparse. Then, $\mathbf{X}$
can be recovered using $\mathbf{X}=\mathbf{Y}\mathbf{U}^{T}$.

\section{Results}

In the experiments, all the algorithms were implemented in MATLAB\textsuperscript{\textregistered}.
For endmember extraction, we compared the SCM algorithm with NCM and
PCOMMEND \cite{zare2013piecewise}, where NCM was implemented as SCM
with $\beta_{1}^{\prime}=0$, $\beta_{2}^{\prime}=0$, $\rho_{2}^{\prime}=0$.
The parameters of SCM have fixed $\eta=0.05$, $\beta_{1}^{\prime}=0.01$,
$\rho_{2}^{\prime}=0$, $\sigma_{0}=0.1$, $\sigma_{\text{max}}=1$
for all the cases. The parameters of PCOMMEND were tuned to give the
best result in each case. Throughout the experiments, we use the mean
of absolute difference as the error, i.e., $\frac{1}{NM}\sum_{i,j}\vert\alpha_{ij}-\alpha_{ij}^{\prime}\vert$
for error of abundances, $\frac{1}{MB}\sum_{i,j}\vert m_{ij}-m_{ij}^{\prime}\vert$
for error of endmembers. Because the endmembers from algorithms may
have a different permutation from the ground truth endmembers, we
permuted the results from algorithms to calculate the error.

For measuring and visualizing the uncertainty from $\left\{ \boldsymbol{\Sigma}_{j}\right\} $,
recall that the covariance matrix of a Gaussian distribution determines
the shape of the distribution, i.e. the eigenvectors are the directions
of the variation patterns while the eigenvalues are the variances
of the projected (onto the eigenvectors) 1D data points. The uncertainty
can be measured by the largest eigenvalue and its corresponding eigenvector.
We use the square root of the largest eigenvalue, $\sigma$, as the
\emph{uncertainty amount} since it corresponds to the standard deviation.
Then, the corresponding eigenvector (normalized), $\mathbf{u}$, can
be viewed as the \emph{uncertainty direction}. The \emph{uncertainty
range} can be visualized by the estimated endmember $\mathbf{r}$
plus (minus) twice the uncertainty direction with uncertainty amount,
i.e. $\mathbf{r}\pm2\sigma\mathbf{u}$.

\subsection{Synthetic images}

We first test SCM on synthetic images generated from the true material
spectra in the Aster spectral library \cite{baldridge2009aster}.
We picked 2 rocks (limestone, basalt), 2 man-made materials (concrete,
asphalt) in the experiments. The spectra of these endmembers are shown
in Figure~\ref{fig:endmemberSignature}. The wavelength of these
materials ranges from 0.4$\mu$m to 14$\mu$m. For each material,
the reflectance of this range is re-sampled into 200 values.

\begin{figure}
\begin{centering}
\includegraphics[width=1\textwidth]{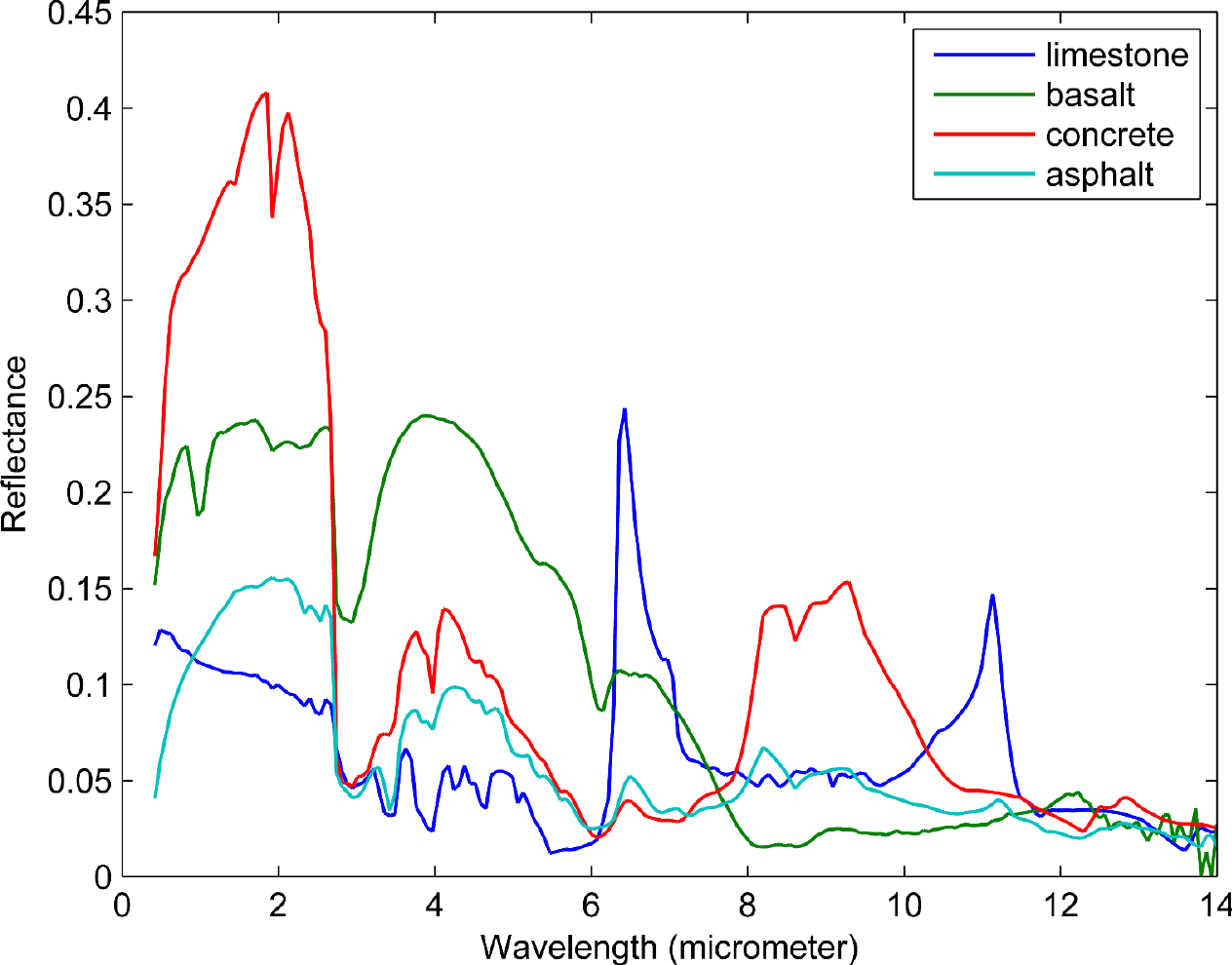}
\par\end{centering}

\caption{Spectral signatures of the 4 endmembers used to generate synthetic
images.}

\label{fig:endmemberSignature}
\end{figure}

A set of synthetic images of size 40 by 40 were generated using the
4 endmembers with different noise levels. For each image, the domain
is divided into 4 rectangular regions, where each region contains
a pure material. Hence, the abundance maps contain 1 corresponding
to the pure material at each pixel and 0 for the other materials.
Then, each abundance map is convolved with an isotropic 2D Gaussian
filter such that the boundary between regions is blurred and the nearby
pixels contain mixed materials. At the end, an additive noise with
mean zero and standard deviation $\sigma_{Y}$ is added to the image.
We conducted experiments on these images to verify the ability to
find endmembers and the ability to estimate the uncertainty.

For endmember extraction, we compared SCM, NCM, and PCOMMEND based
on 5 levels of signal-to-noise ratio (SNR), from 20dB ($\sigma_{Y}\approx0.01$)
to 60dB ($\sigma_{Y}\approx0.0001$). 20 random images were generated
in each case such that the average error can be calculated. The parameters
of SCM were $\beta_{2}^{\prime}=0$, $\rho_{1}^{\prime}=0.005$. Figure~\ref{fig:synthetic_error}
shows the errors of all the algorithms. From the plots, we can see
that SCM has lower errors than NCM and PCOMMEND for all the noise
cases, with respect to both endmembers and abundances. Figure~\ref{fig: synthetic_abundances}
shows the abundance maps from these algorithms for a noisy synthetic
image. We can see that the abundance maps of NCM and PCOMMEND present
more fuzzy abundances within a pure material region due to the noise.
Meanwhile, SCM presents consistent abundances within such a region.

\begin{figure}
\begin{centering}
\includegraphics[width=1\textwidth]{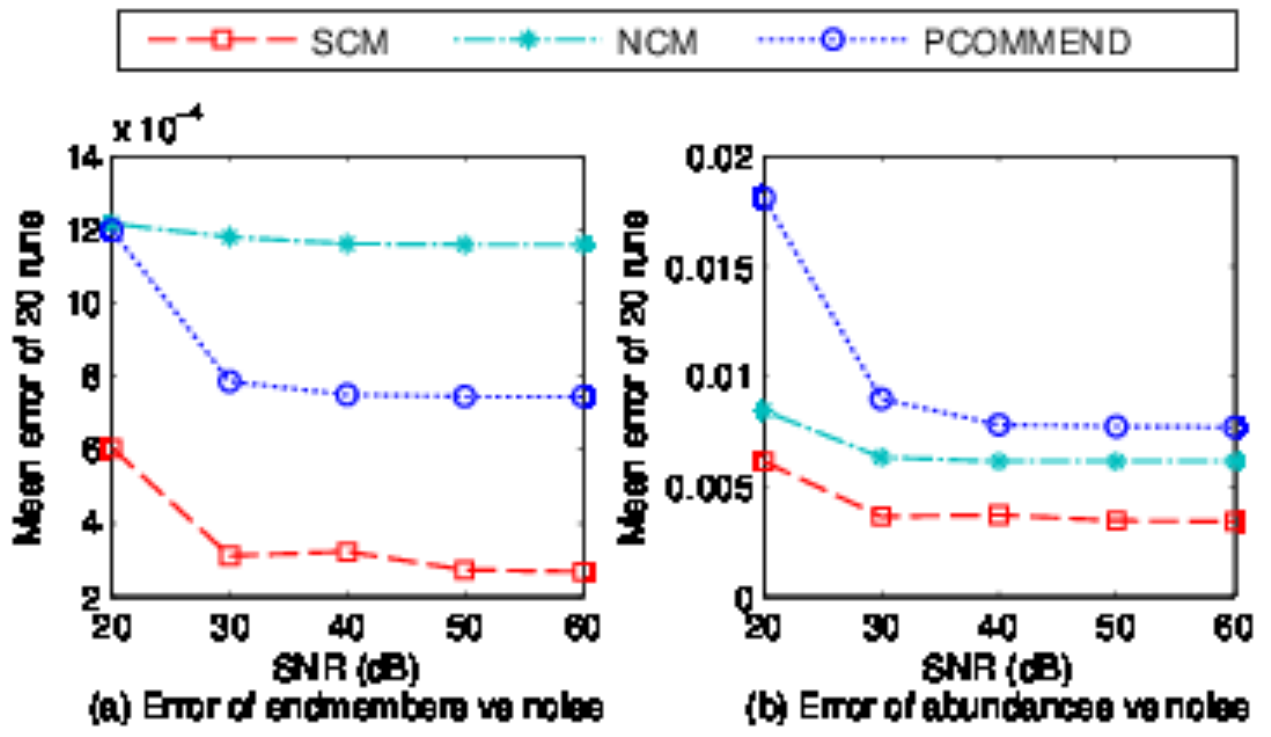}
\par\end{centering}

\caption{Error of endmembers and abundances based on the synthetic imagesfor
all the algorithms.}

\label{fig:synthetic_error}
\end{figure}

\begin{figure}
\begin{centering}
\includegraphics[width=1\textwidth]{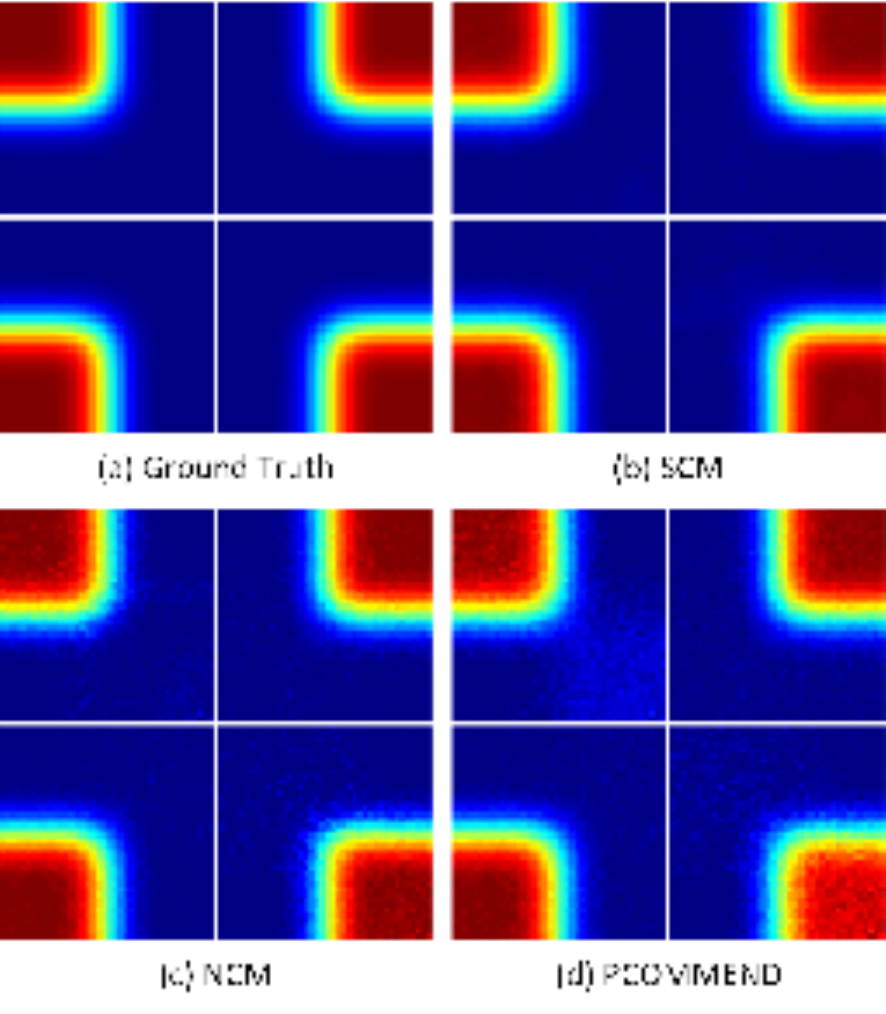}
\par\end{centering}

\caption{Abundance maps from the ground truth (a), SCM (b), NCM (c) and PCOMMEND
(d) for a noisy synthetic image with SNR 20dB.}

\label{fig: synthetic_abundances}
\end{figure}

For uncertainty estimation, we compared the uncertainties based on
different estimated endmembers for the same material in a synthetic
image with SNR 40dB. To achieve this, we changed the value of $\rho_{1}^{\prime}$
from large to small gradually. This causes the location of the estimated
endmembers to change from being close together inside the pixel cloud
to sparsely scattered outside the pixel cloud. We pick the uncertainty
amount of limestone to represent the whole uncertainty. Figure~\ref{fig:rho1_uncer}
shows this value along with the error of endmembers versus decreasing
$\rho_{1}^{\prime}$. The error of endmembers has its minimum in the
middle between $10^{-2}$ and $10^{-3}$. Interestingly, this is also
the place where the uncertainty amount starts to decrease to a stable
value. This corresponds to the intuition that when the endmembers
are outside the pixel cloud, all the pixels can be well represented
by the endmembers thus we have a low uncertainty, while when the endmembers
are inside the pixel cloud, the more they are closely packed together,
more the uncertainty as more pixels are beyond their representation
capabilities. Recalling our fundamental question about error prediction,
the result here implies that we are capable of estimating the uncertainty.

\begin{figure}
\begin{centering}
\includegraphics[width=1\textwidth]{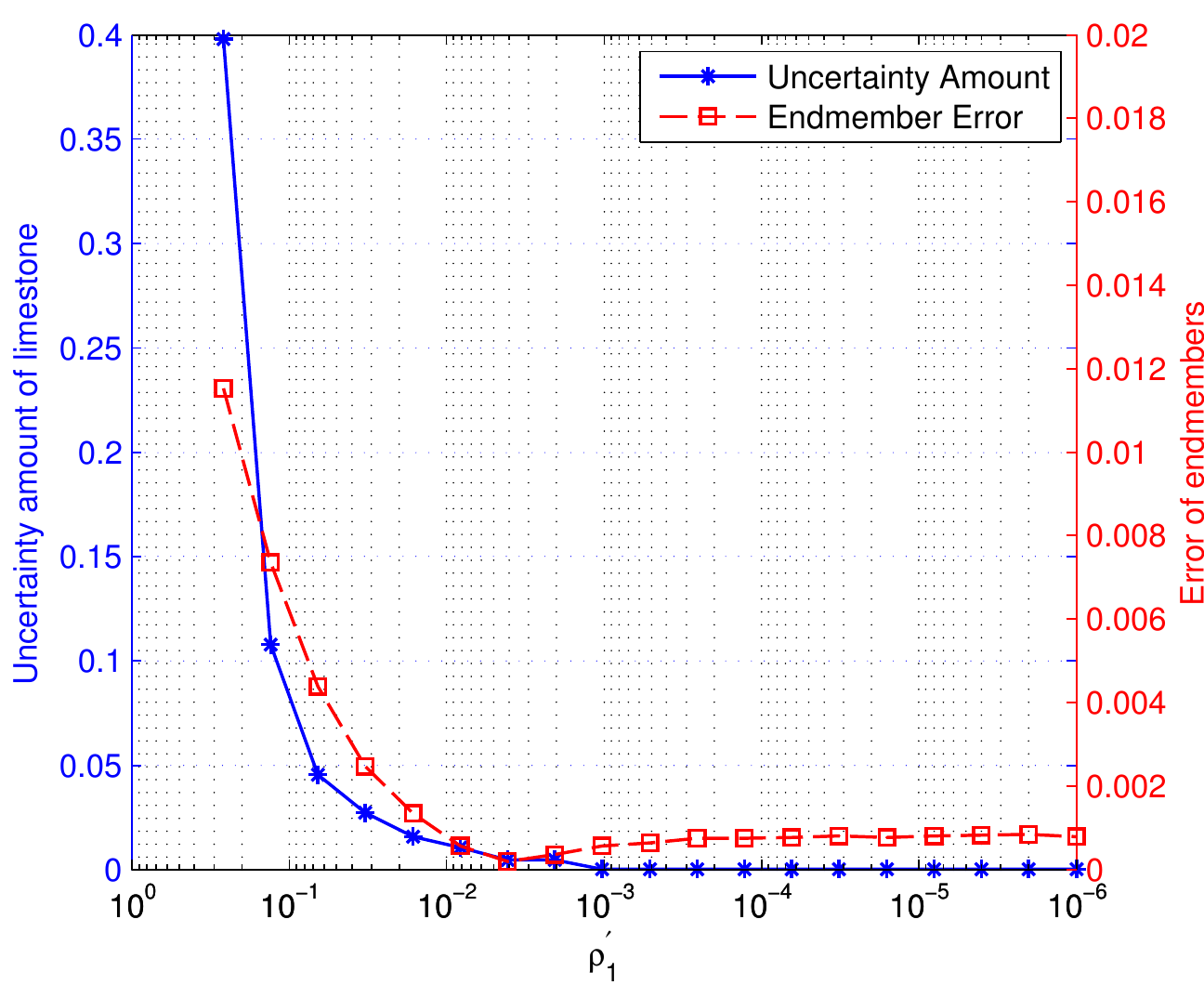}
\par\end{centering}

\caption{Effect of pairwise closeness ($\rho_{1}^{\prime}$) on the uncertainty
amount of limestone (blue solid line, scale on the left) and the error
of endmembers (red dashed line, scale on the right) for a synthetic
image with SNR 40dB. The minimal error corresponds to the starting
point where the uncertainty amount drops to a stable value.}

\label{fig:rho1_uncer}
\end{figure}

Figure~\ref{fig:uncer_range_w/_large_rho1} shows the uncertainty
range with close endmembers when $\rho_{1}^{\prime}=0.1$. From Figure~\ref{fig:rho1_uncer},
the endmembers are actually inside the pixel cloud since it is greater
than the optimal value. We can see that not only the uncertainty amount
reflects the distance to the ground truth, the uncertainty direction
also reflects the distortion of the estimated endmembers. Combing
these pieces of information, the uncertainty range is able to cover
the ground truth for each endmember. Therefore the uncertainty estimated
can serve as a prediction of the endmember error in this case, given
endmembers estimated with a sufficient closeness constraint.

\begin{figure}
\begin{centering}
\includegraphics[width=1\textwidth]{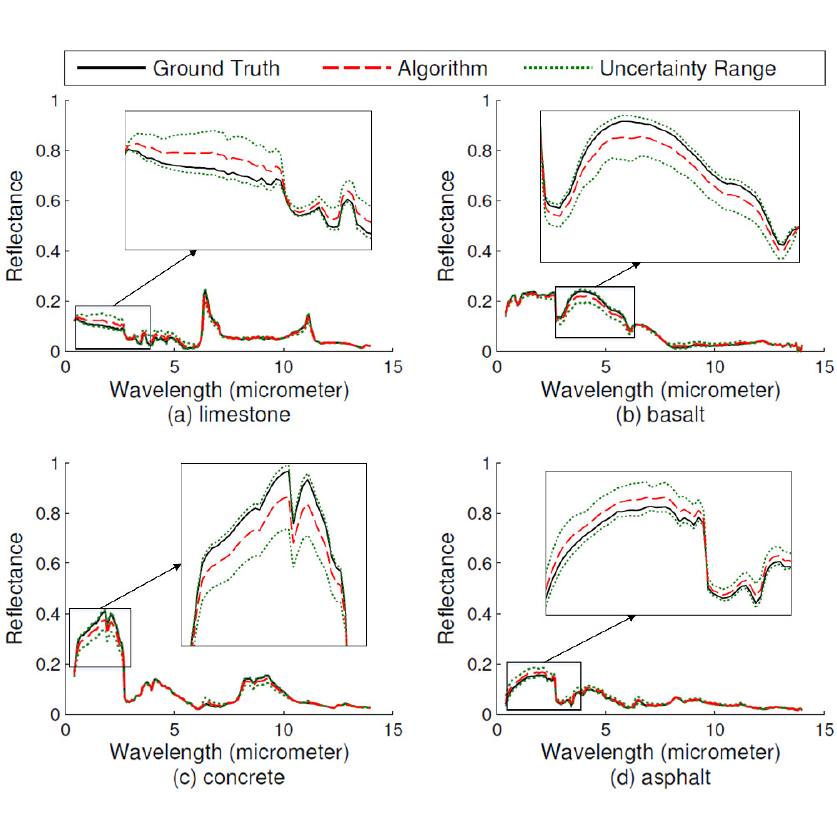}
\par\end{centering}

\caption{Uncertainty ranges with endmembers estimated with $\rho_{1}^{\prime}=0.1$
for the synthetic image tested in Figure~\ref{fig:rho1_uncer}. The
uncertainty ranges cover the ground truth given endmembers estimated
with slight biases.}

\label{fig:uncer_range_w/_large_rho1}
\end{figure}

\subsection{Pavia University}

The SCM algorithm is also applied to the Pavia University dataset,
which was recorded by the Reflective Optics System Imaging Spectrometer
(ROSIS) during a flight over Pavia, northern Italy. It is a 340 by
610 image with 103 bands with wavelengths ranging from 430nm to 860nm.
The real spacing is 1.3 meters. The image covers both natural and
urban areas as shown in Figure~\ref{fig:pavia_rgb}. There are 9
materials identified as ground truth by humans. From the pixels identified
as ground truth, average spectra for each material is calculated as
the ground truth endmember signature. Figure~\ref{fig:pavia_gt_endmembers}
shows the ground truth endmembers. From Figure~\ref{fig:pavia_gt_endmembers},
we find that self-blocking bricks and gravel have very similar spectra
and asphalt and bitumen have very similar spectra. So technically,
in this unsupervised unmixing setting, we can use automated algorithms
to distinguish at most 7 endmembers.

\begin{figure}
\begin{centering}
\includegraphics[width=1\textwidth]{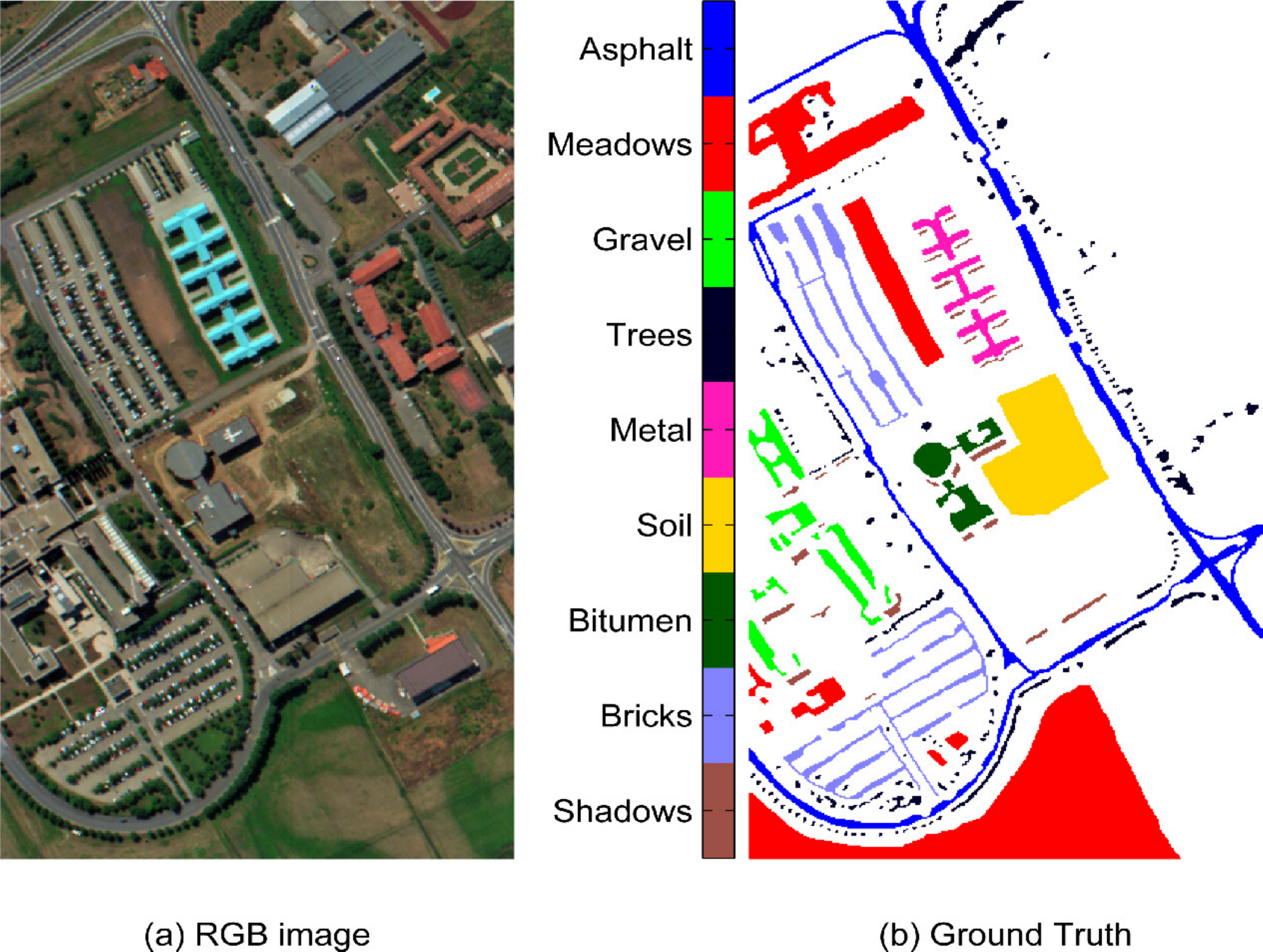}
\par\end{centering}

\caption{RGB image for Pavia University and the ground truth.}

\label{fig:pavia_rgb}
\end{figure}

\begin{figure}
\begin{centering}
\includegraphics[width=1\textwidth]{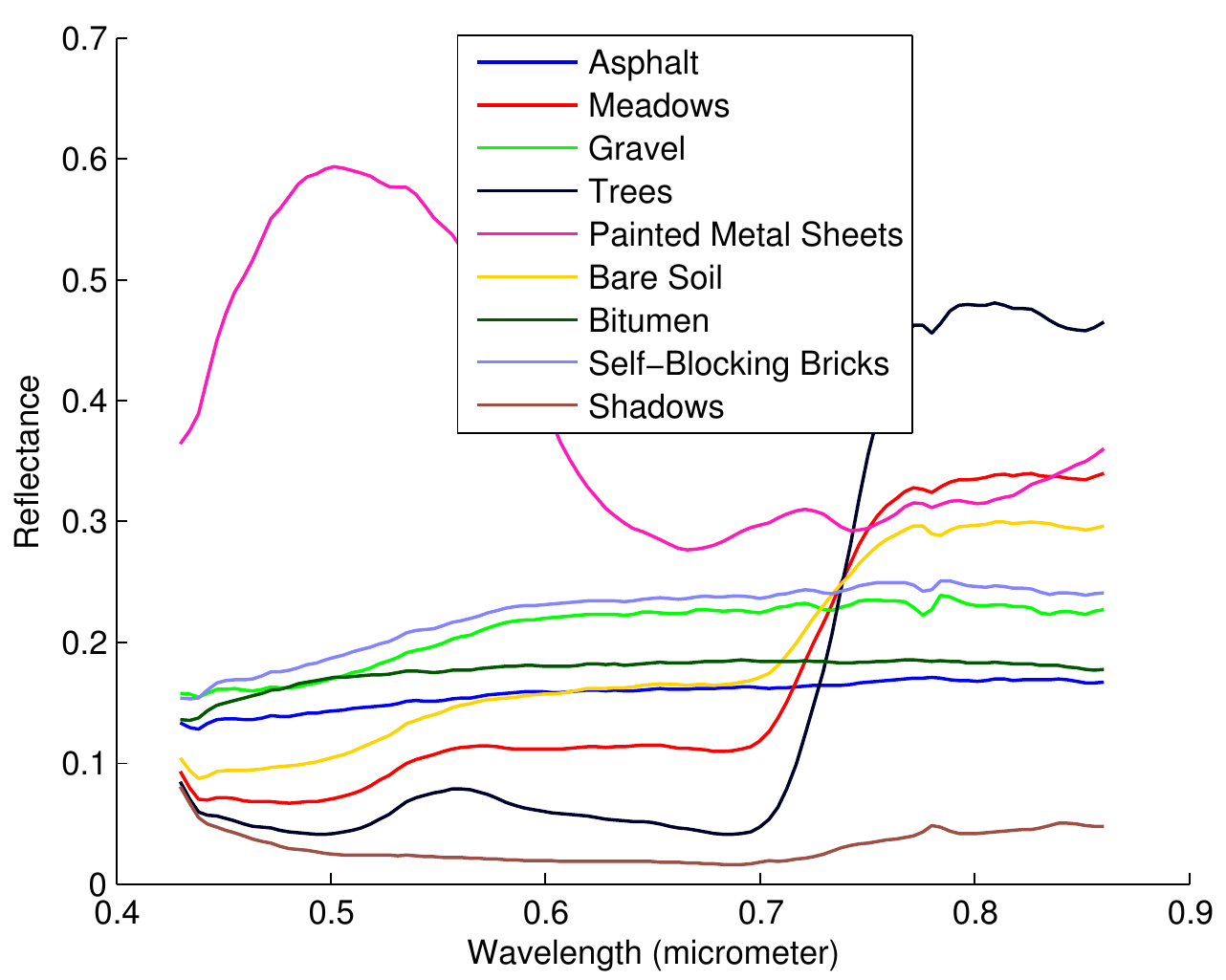}
\par\end{centering}

\caption{Ground truth endmembers for Pavia University. Asphalt and Bitumen
have similar spectral signatures. Gravel and Self-blocking bricks
have similar spectral signatures.}

\label{fig:pavia_gt_endmembers}
\end{figure}

We run SCM, NCM, PCOMMEND on this dataset with 7 endmembers (PCOMMEND
with 6 endmembers as suggested in \cite{zare2013piecewise}). The
parameters for SCM are $\beta_{2}^{\prime}=0.02$, $\rho_{1}^{\prime}=0.05$.
Two materials, gravel and bitumen, are excluded in the comparison
because they are attributed to self-blocking bricks and asphalt respectively.
Figure~\ref{fig:pavia_abundances}(a) shows the abundance maps from
SCM. When compared to the ground truth in Figure~\ref{fig:pavia_rgb},
we can see that the materials are asphalt (bitumen), meadows, trees,
painted metal sheets, bare soil, self-blocking bricks (gravel) and
shadows respectively. The abundance maps of NCM are shown in Figure~\ref{fig:pavia_abundances}(b).
We observe that without the spatial information and the sparsity promoting
effect, the abundance maps present scattered dots within a pure material
region.

\begin{figure}
\begin{centering}
\includegraphics[height=0.95\textheight]{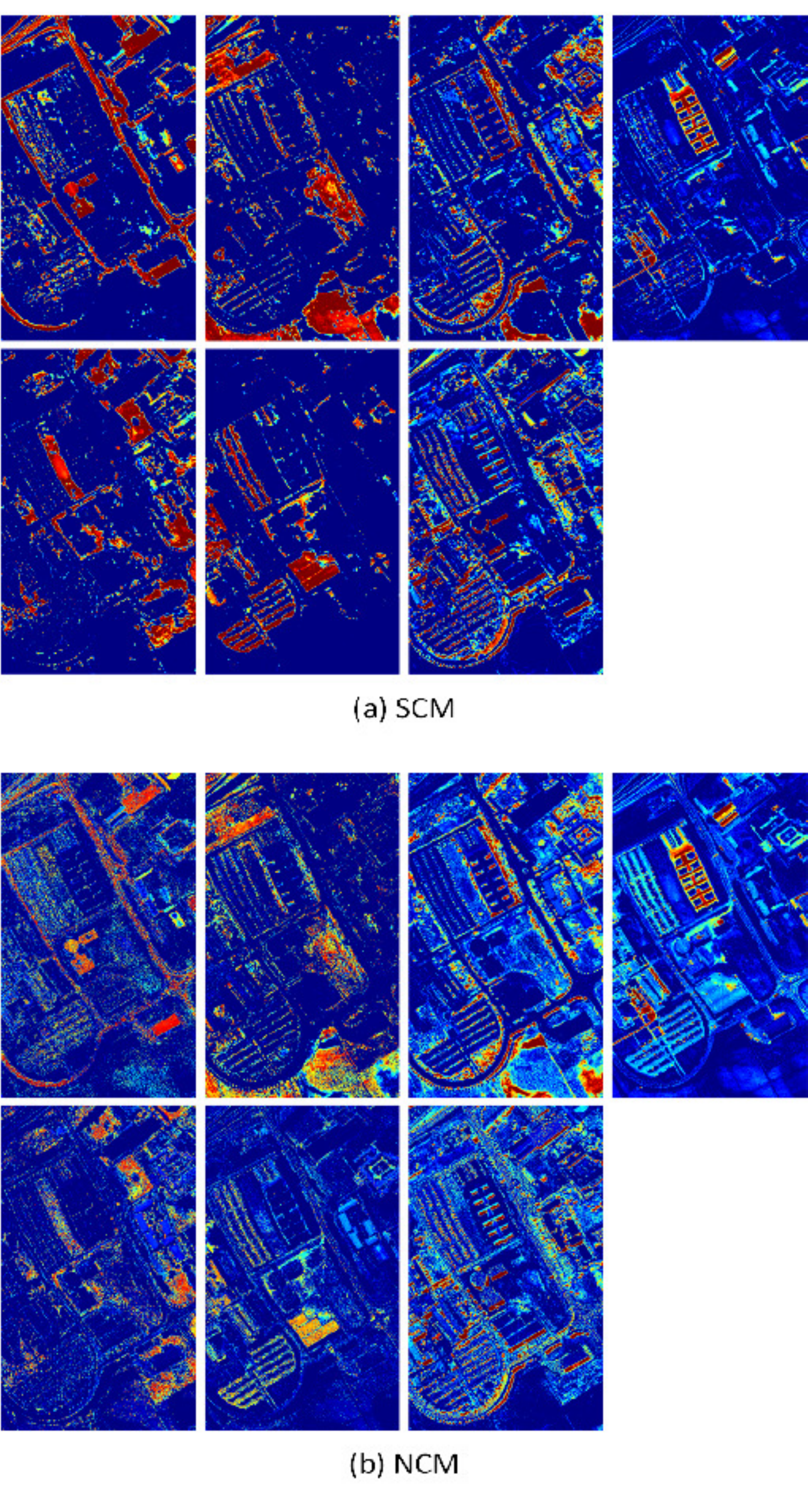}
\par\end{centering}

\caption{Abundance maps from SCM and NCM for Pavia University. The identified
materials are asphalt (bitumen), meadows, trees, painted metal sheets,
bare soil, self-blocking bricks (gravel), shadows respectively.}

\label{fig:pavia_abundances}
\end{figure}

Figure~\ref{fig:pavia_endmember_comparison} shows the resulting
endmember spectra from SCM, NCM, PCOMMEND versus the corresponding
ground truth endmember. We also computed the errors for these endmembers
and the result is shown in Table~\ref{table:pavia_endmembers}. From
these results, we see that for the 7 identified endmembers, the meadows
are missing in PCOMMEND (and this is attributed to wrong ground truth
information regarding meadows). Also, SCM matches the asphalt, meadows,
trees, bare soil and self-blocking bricks best while NCM matches the
painted metal sheets best and PCOMMEND matches the shadows best. The
statistics show that SCM performed best overall (except for a caveat
that the meadows endmembers should be further investigated due to
a discrepancy in the ground-truth).

\begin{figure*}
\begin{centering}
\includegraphics[width=1\textwidth]{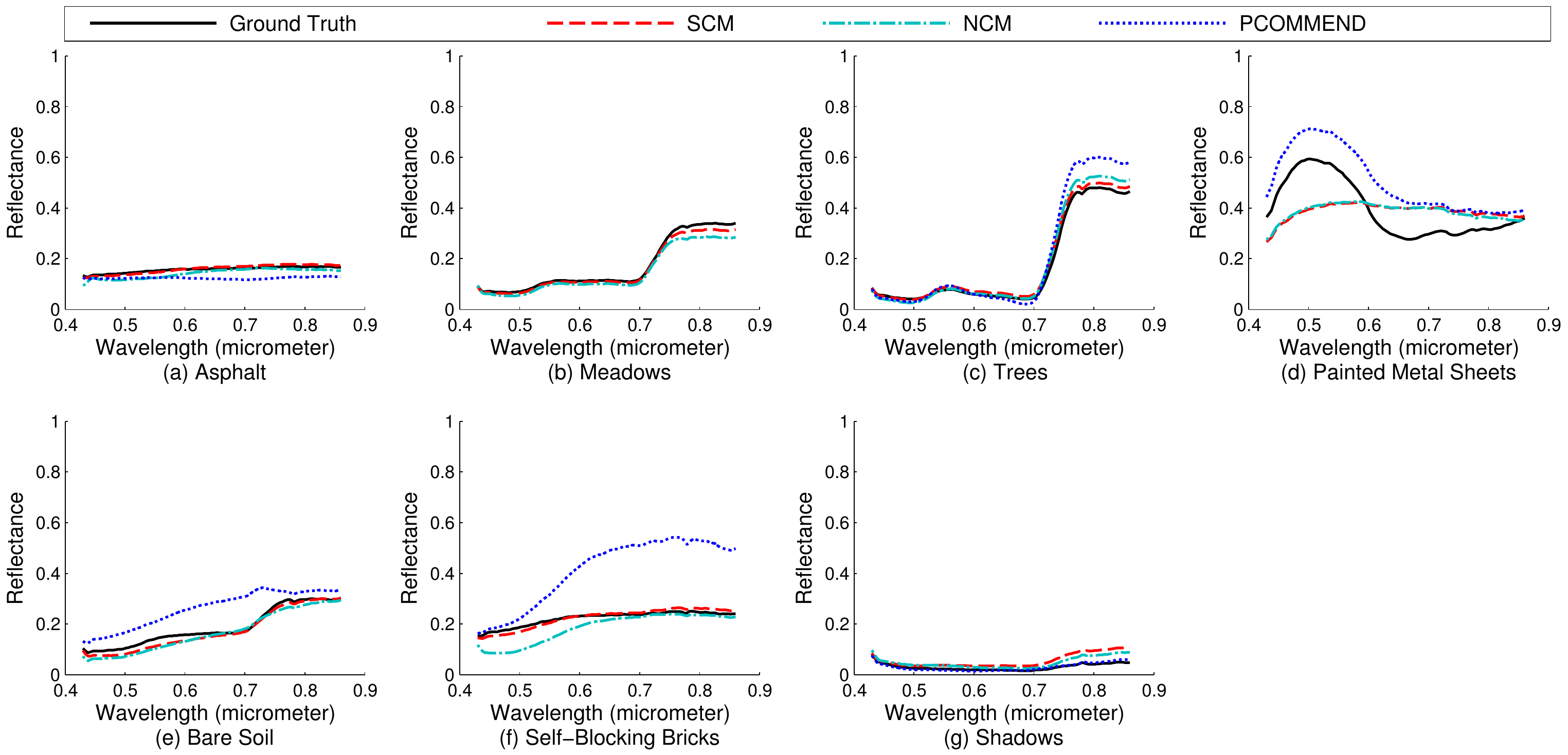}
\par\end{centering}

\caption{Qualitative comparison of endmembers among all the algorithms for
Pavia University.}

\label{fig:pavia_endmember_comparison}
\end{figure*}

\begin{table}

\caption{Quantitative comparison of endmembers among all the algorithms for
Pavia University.}

\begin{centering}
\begin{tabular}{|c|c|c|c|}
\hline 
Error & SCM & NCM & PCOMMEND\tabularnewline
\hline 
\hline 
Asphalt & \textbf{0.0064} & 0.0149 & 0.0335\tabularnewline
\hline 
Meadows & \textbf{0.0095} & 0.0227 & -\tabularnewline
\hline 
Trees & \textbf{0.0135} & 0.0197 & 0.0419\tabularnewline
\hline 
Painted Metal Sheets & 0.1007 & \textbf{0.0947} & 0.1096\tabularnewline
\hline 
Bare Soil & \textbf{0.0144} & 0.0213 & 0.0748\tabularnewline
\hline 
Self-Blocking Bricks & \textbf{0.0105} & 0.0378 & 0.1885\tabularnewline
\hline 
Shadows & 0.0251 & 0.0170 & \textbf{0.0043}\tabularnewline
\hline 
Average & \textbf{0.0257} & 0.0326 & 0.0754\tabularnewline
\hline 
\end{tabular}
\par\end{centering}

\label{table:pavia_endmembers}
\end{table}

The uncertainty ranges of endmembers from SCM for Pavia University
along with the ground truth are shown in Figure~\ref{fig:pavia_scm_uncer}.
We see that for those well estimated endmembers, the uncertainties
are so small that the endmembers coincide with the uncertainty ranges.
For the largely biased endmember of painted metal sheets, the uncertainty
is also large such that it nearly covers the ground truth. For the
shadows, the SCM estimated endmember deviates from the ground truth
at the right end and the uncertainty range also features a large gap
at the right end.

\begin{figure*}
\begin{centering}
\includegraphics[width=1\textwidth]{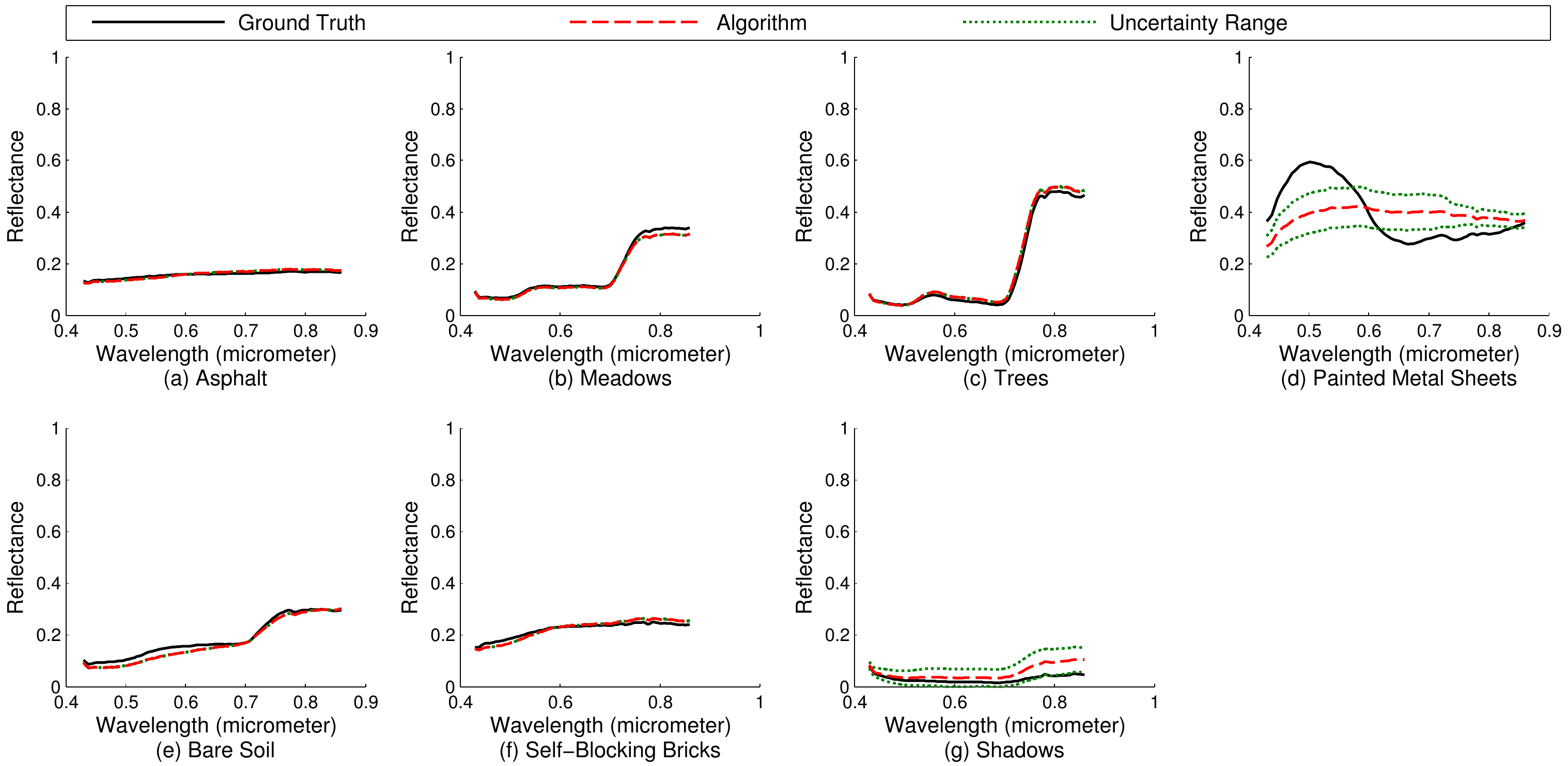}
\par\end{centering}

\caption{Uncertainty ranges of endmembers estimated from SCM for Pavia University.}

\label{fig:pavia_scm_uncer}
\end{figure*}

\subsection{Indian Pines}

The Indian Pines dataset was collected by the Airborne Visible/Infrared
Imaging Spectrometer (AVIRIS) sensor over the Indian Pines test site
in Northwestern Indiana. It is a 145 by 145 image with 220 bands in
the wavelength range 0.4 - 2.5$\mu$m. Most areas of the image are
agriculture, forest and other vegetation, except two highways, a railway
line, and a few buildings. Figure~\ref{fig:indian_pines_rgb} shows
the RGB image of this dataset and the ground truth materials. Figure~\ref{fig:indian_pines_gt}
shows the ground truth endmembers. We can see that the ground truth
distinguishes the pixels into 16 classes, where most have quite similar
spectra. 

\begin{figure*}
\begin{centering}
\includegraphics[width=1\textwidth]{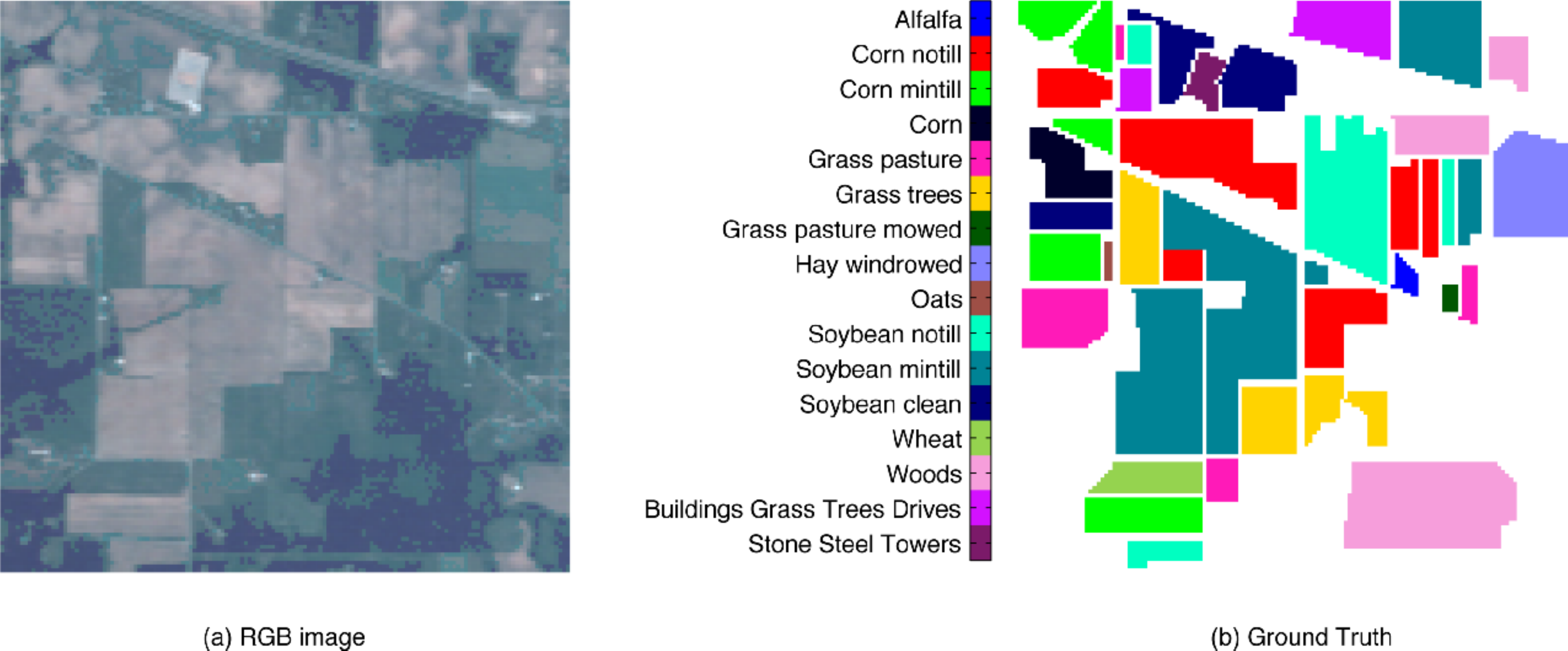}
\par\end{centering}

\caption{RGB image for Indian Pines and the ground truth.}

\label{fig:indian_pines_rgb}
\end{figure*}

\begin{figure}
\begin{centering}
\includegraphics[width=1\textwidth]{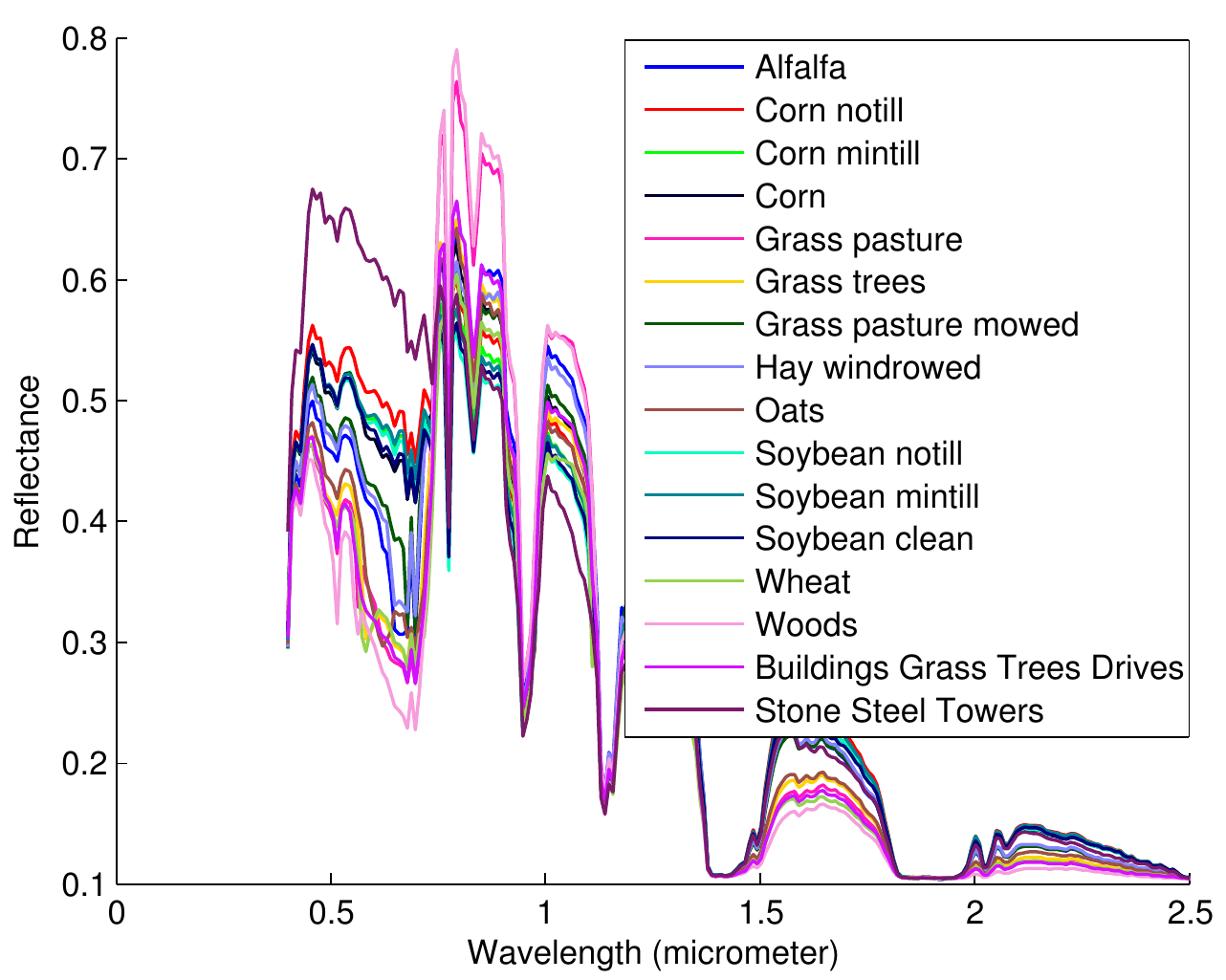}
\par\end{centering}

\caption{Ground truth endmembers for Indian Pines.}

\label{fig:indian_pines_gt}
\end{figure}

From the regions with visibly different colors in Figure~\ref{fig:indian_pines_rgb}
and the noticeably different endmember spectra in Figure~\ref{fig:indian_pines_gt},
we set $M=4$ for all the algorithms. The parameters for SCM were
set to $\beta_{2}^{\prime}=0.005$, $\rho_{1}^{\prime}=0.05$. The
parameters for PCOMMEND were set to 2 clusters with 2 endmembers in
each cluster. The results are shown in Figure~\ref{fig:indian_pines_abundances}.
The abundance maps show that NCM, PCOMMEND present inconsistent abundances
in a ground truth region, e.g. soybean mintill, while SCM presents
consistent abundances in the same region.

\begin{figure*}
\begin{centering}
\includegraphics[width=1\textwidth]{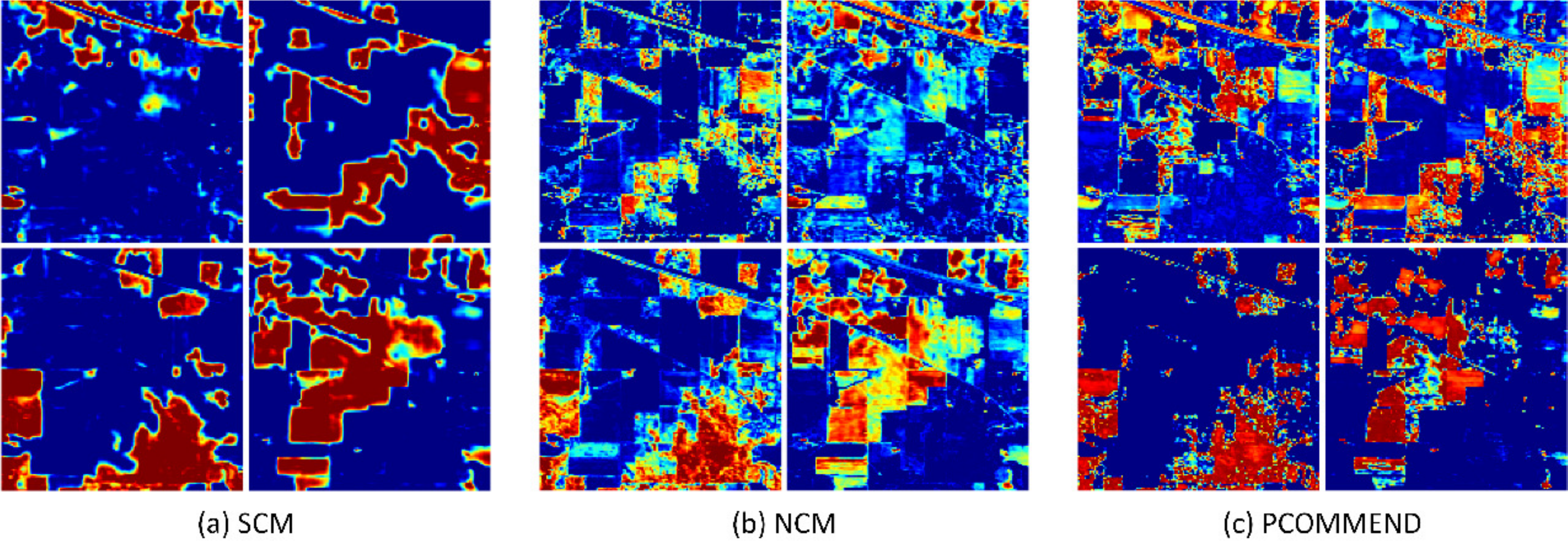}
\par\end{centering}

\caption{Abundance maps from SCM, NCM and PCOMMEND for Indian Pines.}

\label{fig:indian_pines_abundances}
\end{figure*}

\section{Discussion and Conclusion}

In this paper we presented a spatial compositional model (SCM) for
linearly unmixing hyperspectral images. The benefits of our model
include calculating the full likelihood for uncertainty estimation,
a weighted smoothness term on neighboring abundances, and a simple
and efficient algorithm that also estimates the endmember uncertainty.
The algorithm usually converges within 100 iterations and takes about
2 minutes to process the Pavia University dataset on a laptop with
an i7 CPU. The results on synthetic and real datasets show that the
estimated endmembers are more accurate than NCM and PCOMMEND. Moreover,
the uncertainty encoded by the covariance matrix shows that its range
can predict error when the estimated endmembers are inside the pixels.
Future work will focus on physically realistic models for hyperspectral
unmixing.

\section*{Appendix}

We show that the objective function (\ref{eq:E6}) can be approximated
by (\ref{eq:E7}) for minimization with respect to $\mathbf{A},\,\mathbf{R}$
in this Appendix, i.e.
\[
0<\gamma\mathbf{z}^{T}\mathbf{Q}^{-1}\mathbf{z}\ll\gamma\Vert\mathbf{Y}-\mathbf{A}\mathbf{R}\Vert_{F}^{2},
\]
\[
0<\text{log}\left|\mathbf{Q}\right|-\sum_{j=1}^{M}\text{log}\left|\mathbf{S}_{j}\right|\ll\gamma\Vert\mathbf{Y}-\mathbf{A}\mathbf{R}\Vert_{F}^{2},
\]
given $\gamma,\,\left\{ \mathbf{S}_{j}\right\} $ fixed at some optimal
values. To be specific, both $\gamma\mathbf{z}^{T}\mathbf{Q}^{-1}\mathbf{z}$
and $\text{log}\left|\mathbf{Q}\right|-\sum_{j=1}^{M}\text{log}\left|\mathbf{S}_{j}\right|$
are in $O\left(MB\right)$ while $\gamma\Vert\mathbf{Y}-\mathbf{A}\mathbf{R}\Vert_{F}^{2}$
is in $O\left(NB\right)$. In the applications---Pavia University
($N=207400$, $M=7$) and Indian Pines ($N=21025$, $M=4$)---$\gamma\mathbf{z}^{T}\mathbf{Q}^{-1}\mathbf{z}$
is about 9\% and 2\% of the least squares term respectively while
$\text{log}\left|\mathbf{Q}\right|-\sum_{j=1}^{M}\text{log}\left|\mathbf{S}_{j}\right|$
is about 0.02\% and 0.001\% respectively.

We first show that $\mathbf{z}^{T}\mathbf{Q}^{-1}\mathbf{z}$ (positive
because $\mathbf{Q}\in\text{SPD}\left(MB\right)$) is negligible compared
to $\Vert\mathbf{Y}-\mathbf{A}\mathbf{R}\Vert_{F}^{2}$. Assume $\mathbf{A}^{T}\mathbf{A}$
is nonsingular (hence $\mathbf{A}^{T}\mathbf{A}\otimes\mathbf{I}_{B}\in\text{SPD}\left(MB\right)$),
from the inequality in Lemma~\ref{lemma1} (given at the end of this
Appendix), we have
\begin{eqnarray*}
\mathbf{z}^{T}\mathbf{Q}^{-1}\mathbf{z} & = & \mathbf{z}^{T}\left(\left[\delta_{ij}\mathbf{S}_{j}\right]+\mathbf{A}^{T}\mathbf{A}\otimes\mathbf{I}_{B}\right)^{-1}\mathbf{z}\\
 & < & \mathbf{z}^{T}\left(\mathbf{A}^{T}\mathbf{A}\otimes\mathbf{I}_{B}\right)^{-1}\mathbf{z}\\
 & = & \mathbf{z}^{T}\left(\left(\mathbf{V}\boldsymbol{\Lambda}\right)^{-T}\otimes\mathbf{I}_{B}\right)\left(\left(\mathbf{V}\boldsymbol{\Lambda}\right)^{-1}\otimes\mathbf{I}_{B}\right)\mathbf{z}\\
 & = & \Vert\text{vec}\left(\left(\mathbf{Y}-\mathbf{A}\mathbf{R}\right)^{T}\mathbf{A}\left(\mathbf{V}\boldsymbol{\Lambda}\right)^{-T}\right)\Vert^{2}\\
 & = & \Vert\text{vec}\left(\left(\mathbf{Y}-\mathbf{A}\mathbf{R}\right)^{T}\mathbf{U}\right)\Vert^{2}\\
 & = & \Vert\mathbf{U}^{T}\left(\mathbf{Y}-\mathbf{A}\mathbf{R}\right)\Vert_{F}^{2}
\end{eqnarray*}
where $\mathbf{A}=\mathbf{U}\boldsymbol{\Lambda}\mathbf{V}^{T}$,
$\mathbf{U}\in\mathbb{R}^{N\times M}$, $\boldsymbol{\Lambda}\in\mathbb{R}^{M\times M}$,
$\mathbf{V}\in\mathbb{R}^{M\times M}$ is the compact \emph{singular
value decomposition} (SVD) of $\mathbf{A}$. Since $\mathbf{U}^{T}$
is part of an orthogonal matrix, $\mathbf{U}^{T}\left(\mathbf{Y}-\mathbf{A}\mathbf{R}\right)$
can be seen as rotating the columns of $\mathbf{Y}-\mathbf{A}\mathbf{R}$
and picking only $M$ elements of the rotated vectors, it is trivial
compared to $\Vert\mathbf{Y}-\mathbf{A}\mathbf{R}\Vert_{F}^{2}$ which
has $N$ elements for each column of $\mathbf{Y}-\mathbf{A}\mathbf{R}$.

Second we can show that $\text{log}\left|\mathbf{Q}\right|-\sum_{j=1}^{M}\text{log}\left|\mathbf{S}_{j}\right|>0$
and it is also negligible compared to $\gamma\Vert\mathbf{Y}-\mathbf{A}\mathbf{R}\Vert_{F}^{2}$.
The positivity arises from \emph{Weyl's inequality} (Theorem 4.3.1
in \cite{horn1985matrix}) as the eigenvalues of $\mathbf{Q}$ are
greater than those of $\left[\delta_{ij}\mathbf{S}_{j}\right]$. Note
that 
\begin{eqnarray*}
\text{log}\left|\mathbf{Q}\right| & = & \text{log}\left|\left[\delta_{ij}\mu^{2}\boldsymbol{\Sigma}_{j}^{-1}\right]+\mathbf{A}^{T}\mathbf{A}\otimes\mathbf{I}_{B}\right|\\
 & = & \text{log}\mu^{2MB}\left|\left[\delta_{ij}\boldsymbol{\Sigma}_{j}^{-1}\right]+\mu^{-2}\mathbf{A}^{T}\mathbf{A}\otimes\mathbf{I}_{B}\right|\\
 & = & -MB\text{log}\gamma+\text{log}\left|\left[\delta_{ij}\boldsymbol{\Sigma}_{j}^{-1}\right]+\gamma\mathbf{A}^{T}\mathbf{A}\otimes\mathbf{I}_{B}\right|,
\end{eqnarray*}
and
\begin{eqnarray*}
-\sum_{j=1}^{M}\text{log}\left|\mathbf{S}_{j}\right| & = & -\sum_{j=1}^{M}\text{log}\left|\mu^{2}\boldsymbol{\Sigma}_{j}^{-1}\right|\\
 & = & -\sum_{j=1}^{M}\left\{ \text{log}\mu^{2B}+\text{log}\left|\boldsymbol{\Sigma}_{j}^{-1}\right|\right\} \\
 & = & MB\text{log}\gamma-\sum_{j=1}^{M}\text{log}\left|\boldsymbol{\Sigma}_{j}^{-1}\right|.
\end{eqnarray*}
Let $\sigma_{j1},...,\sigma_{jB}$ be the the eigenvalues of $\boldsymbol{\Sigma}_{j}^{-1}$
in ascending order and $\lambda_{1},...,\lambda_{M}$ be the eigenvalues
of $\mathbf{A}^{T}\mathbf{A}$ in ascending order (so the eigenvalues
of $\mathbf{A}^{T}\mathbf{A}\otimes\mathbf{I}_{B}$ are $\lambda_{1},...,\lambda_{M}$
duplicated by $B$ times). The sum of the above two expansions leads
to 
\begin{alignat*}{1}
 & \text{log}\left|\mathbf{Q}\right|-\sum_{j=1}^{M}\text{log}\left|\mathbf{S}_{j}\right|\\
= & \text{log}\left|\left[\delta_{ij}\boldsymbol{\Sigma}_{j}^{-1}\right]+\gamma\mathbf{A}^{T}\mathbf{A}\otimes\mathbf{I}_{B}\right|-\sum_{j=1}^{M}\text{log}\left|\boldsymbol{\Sigma}_{j}^{-1}\right|\\
\leq & \text{log}\prod_{j=1}^{M}\prod_{k=1}^{B}\left(\sigma_{jk}+\gamma\lambda_{M}\right)-\sum_{j=1}^{M}\text{log}\prod_{k=1}^{B}\sigma_{jk}\\
= & \sum_{j=1}^{M}\sum_{k=1}^{B}\text{log}\left(1+\gamma\lambda_{M}/\sigma_{jk}\right)
\end{alignat*}
where Weyl's inequality for the eigenvalues is again used. Given that
the reflectances of the endmember signatures are in the range $[0,1]$
(a real hyperspectral image is usually normalized during preprocessing),
the endmember covariance matrix should have $\sigma_{jk}$ bounded
from below. Although the inside of the logarithm is large, the logarithm
makes it limited. Compared with $\gamma\Vert\mathbf{Y}-\mathbf{A}\mathbf{R}\Vert_{F}^{2}\approx NB$
with $\gamma$ given in (\ref{eq:optimalGamma1}), $\sum_{j=1}^{M}\sum_{k=1}^{B}\text{log}\left(1+\gamma\lambda_{M}/\sigma_{jk}\right)$
is negligible considering $M\ll N$.
\begin{lem}
\label{lemma1}Let $\mathbf{A}\in\text{SPD}\left(n\right)$, $\mathbf{B}\in\text{SPD}\left(n\right)$,
for any nonzero $\mathbf{x}\in\mathbb{R}^{n}$, then $\mathbf{x}^{T}\left(\mathbf{A}+\mathbf{B}\right)^{-1}\mathbf{x}<\mathbf{x}^{T}\mathbf{A}^{-1}\mathbf{x}$.\end{lem}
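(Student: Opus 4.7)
My plan is to show the equivalent matrix inequality $\mathbf{A}^{-1} - (\mathbf{A}+\mathbf{B})^{-1} \succ 0$ in the Loewner sense, from which the claim follows immediately by evaluating the quadratic form at $\mathbf{x}$. The key identity I intend to exploit is the Woodbury (matrix inversion) identity applied with $\mathbf{U}=\mathbf{V}=\mathbf{I}_n$ and inner matrix $\mathbf{B}$, which yields the explicit expression
\[
(\mathbf{A}+\mathbf{B})^{-1} \;=\; \mathbf{A}^{-1} - \mathbf{A}^{-1}\bigl(\mathbf{A}^{-1}+\mathbf{B}^{-1}\bigr)^{-1}\mathbf{A}^{-1}.
\]
Rearranging gives
\[
\mathbf{A}^{-1} - (\mathbf{A}+\mathbf{B})^{-1} \;=\; \mathbf{A}^{-1}\bigl(\mathbf{A}^{-1}+\mathbf{B}^{-1}\bigr)^{-1}\mathbf{A}^{-1},
\]
which is the decisive step because it writes the difference as a symmetric congruence transformation by the invertible matrix $\mathbf{A}^{-1}$.

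From here the argument is short: since $\mathbf{A},\mathbf{B}\in\text{SPD}(n)$, both $\mathbf{A}^{-1}$ and $\mathbf{B}^{-1}$ lie in $\text{SPD}(n)$, so their sum $\mathbf{A}^{-1}+\mathbf{B}^{-1}$ is SPD, and hence its inverse $(\mathbf{A}^{-1}+\mathbf{B}^{-1})^{-1}$ is SPD as well. Conjugating an SPD matrix by the nonsingular symmetric matrix $\mathbf{A}^{-1}$ preserves positive definiteness, so the right-hand side above is in $\text{SPD}(n)$. Therefore, for any nonzero $\mathbf{x}\in\mathbb{R}^n$,
\[
\mathbf{x}^T\mathbf{A}^{-1}\mathbf{x} - \mathbf{x}^T(\mathbf{A}+\mathbf{B})^{-1}\mathbf{x} \;=\; \bigl(\mathbf{A}^{-1}\mathbf{x}\bigr)^T\bigl(\mathbf{A}^{-1}+\mathbf{B}^{-1}\bigr)^{-1}\bigl(\mathbf{A}^{-1}\mathbf{x}\bigr) \;>\; 0,
\]
where strict positivity uses that $\mathbf{A}^{-1}\mathbf{x}\neq \mathbf{0}$ whenever $\mathbf{x}\neq\mathbf{0}$.

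There is essentially no obstacle here; the main thing to be careful about is justifying the strict inequality (as opposed to $\ge$), which is why I want to keep track of the invertibility of $\mathbf{A}^{-1}$ in the congruence step rather than invoking only the operator monotonicity of inversion. As an alternative approach in case one prefers to avoid the Woodbury identity, one can simultaneously diagonalize: write $\mathbf{A}=\mathbf{L}\mathbf{L}^T$ (e.g.\ Cholesky) and set $\mathbf{C}=\mathbf{L}^{-1}\mathbf{B}\mathbf{L}^{-T}\in\text{SPD}(n)$; then $\mathbf{A}^{-1}-(\mathbf{A}+\mathbf{B})^{-1}=\mathbf{L}^{-T}\bigl[\mathbf{I}-(\mathbf{I}+\mathbf{C})^{-1}\bigr]\mathbf{L}^{-1}$, and the bracketed matrix has eigenvalues $\mu_i/(1+\mu_i)>0$ (where $\mu_i>0$ are the eigenvalues of $\mathbf{C}$), giving the same conclusion. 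I will go with the Woodbury route since it is a one-line identity and yields the sharpest expression.
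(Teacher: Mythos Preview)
Your proof is correct and uses essentially the same idea as the paper: apply the Woodbury identity to express $\mathbf{A}^{-1}-(\mathbf{A}+\mathbf{B})^{-1}$ as a congruence of an SPD matrix by an invertible matrix, then invoke positive definiteness for the strict inequality. The only cosmetic difference is that the paper first passes to the eigenbasis of $\mathbf{A}$ before applying Woodbury, whereas you apply it directly; your version is slightly cleaner since the preliminary diagonalization is not actually needed.
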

\begin{proof}
Let $\mathbf{A}=\mathbf{U}\boldsymbol{\Sigma}\mathbf{U}^{T}$, $\mathbf{B}=\mathbf{V}\boldsymbol{\Lambda}\mathbf{V}^{T}$
be the eigendecomposition of $\mathbf{A}$ and $\mathbf{B}$ respectively.
Then 
\[
\mathbf{x}^{T}\mathbf{A}^{-1}\mathbf{x}=\mathbf{x}^{T}\left(\mathbf{U}\boldsymbol{\Sigma}\mathbf{U}^{T}\right)^{-1}\mathbf{x}=\mathbf{y}^{T}\boldsymbol{\Sigma}^{-1}\mathbf{y}
\]
where $\mathbf{y}=\mathbf{U}^{T}\mathbf{x}$, while 
\begin{eqnarray*}
\mathbf{x}^{T}\left(\mathbf{A}+\mathbf{B}\right)^{-1}\mathbf{x} & = & \mathbf{x}^{T}\left(\mathbf{U}\left(\boldsymbol{\Sigma}+\mathbf{Q}\boldsymbol{\Lambda}\mathbf{Q}^{T}\right)\mathbf{U}^{T}\right)^{-1}\mathbf{x}\\
 & = & \mathbf{y}^{T}\left(\boldsymbol{\Sigma}+\mathbf{Q}\boldsymbol{\Lambda}\mathbf{Q}^{T}\right)^{-1}\mathbf{y}
\end{eqnarray*}
where $\mathbf{Q}=\mathbf{U}^{T}\mathbf{V}$. By the Woodbury identity,
\[
\left(\boldsymbol{\Sigma}+\mathbf{Q}\boldsymbol{\Lambda}\mathbf{Q}^{T}\right)^{-1}=\boldsymbol{\Sigma}^{-1}-\boldsymbol{\Sigma}^{-1}\mathbf{Q}\mathbf{C}\mathbf{Q}^{T}\boldsymbol{\Sigma}^{-1}
\]
where
\[
\mathbf{C}=\left(\boldsymbol{\Lambda}^{-1}+\mathbf{Q}^{T}\boldsymbol{\Sigma}^{-1}\mathbf{Q}\right)^{-1},
\]
we have 
\[
\mathbf{y}^{T}\left(\boldsymbol{\Sigma}+\mathbf{Q}\boldsymbol{\Lambda}\mathbf{Q}^{T}\right)^{-1}\mathbf{y}=\mathbf{y}^{T}\boldsymbol{\Sigma}^{-1}\mathbf{y}-\mathbf{z}^{T}\mathbf{C}\mathbf{z}
\]
where $\mathbf{z}=\mathbf{Q}^{T}\boldsymbol{\Sigma}^{-1}\mathbf{y}$.
Because $\mathbf{C}\in\text{SPD}\left(n\right)$ (since $\boldsymbol{\Lambda}^{-1}\in\text{SPD}\left(n\right)$
and $\mathbf{Q}^{T}\boldsymbol{\Sigma}^{-1}\mathbf{Q}\in\text{SPD}\left(n\right)$)
and $\mathbf{z}$ is nonzero, $\mathbf{z}^{T}\mathbf{C}\mathbf{z}>0$.
Then we have $\mathbf{x}^{T}\left(\mathbf{A}+\mathbf{B}\right)^{-1}\mathbf{x}<\mathbf{x}^{T}\mathbf{A}^{-1}\mathbf{x}$.
\end{proof}
\bibliographystyle{abbrv}
\bibliography{reference}

\end{document}